\author{Vincent Roulet}
\thanks{A shorter, preliminary version of this paper appeared at the NIPS 2015 workshop ``Transfer and Multi-Task Learning: Trends and New Perspectives''.}
\address{INRIA - SIERRA Project Team \& D.I., UMR 8548,\vskip 0ex
\'Ecole Normale Sup\'erieure, Paris, France.}
\email{vincent.roulet@inria.fr}
\author{Fajwel Fogel}
\address{C.M.A.P., \'Ecole Polytechnique, UMR CNRS 7641}
\email{fajwel.fogel@cmap.polytechnique.fr}
\author{Alexandre d'Aspremont}
\address{CNRS \& D.I., UMR 8548, \vskip 0ex
\'Ecole Normale Sup\'erieure, Paris, France.}
\email{aspremon@ens.fr}
\author{Francis Bach} 
\address{INRIA - SIERRA Project Team  \& D.I., UMR 8548,\vskip 0ex
\'Ecole Normale Sup\'erieure, Paris, France.}
\email{francis.bach@inria.fr}
\keywords{Clustering, Multitask, Dimensionality Reduction, Supervised Learning, Supervised Clustering}
\date{\today}
\subjclass[2010]{}
\newtheorem{theorem}{Theorem}[section]
\newtheorem{proposition}[theorem]{Proposition}
\renewenvironment{proof}{\textbf{Proof.}}{\QED\bigskip}
\definecolor{ddarkbrown}{rgb}{0.5,0.2,0.05} \definecolor{bbluegray}{rgb}{0.05,0,0.5}
\newcommand{\BEAS}{\begin{eqnarray*}}
\newcommand{\EEAS}{\end{eqnarray*}}
\newcommand{\BEA}{\begin{eqnarray}}
\newcommand{\EEA}{\end{eqnarray}}
\newcommand{\BEQ}{\begin{equation}}
\newcommand{\EEQ}{\end{equation}}
\newcommand{\BIT}{\begin{itemize}}
\newcommand{\EIT}{\end{itemize}}
\newcommand{\BNUM}{\begin{enumerate}}
\newcommand{\ENUM}{\end{enumerate}}
\newcommand{\BA}{\begin{array}}
\newcommand{\EA}{\end{array}}
\newcommand{\eg}{{\it e.g.}}
\newcommand{\ie}{{\it i.e.}}
\newcommand{\ones}{\mathbf 1}
\newcommand{\reals}{{\mathbb R}}
\newcommand{\symm}{{\mbox{\bf S}}}  
\newcommand{\Card}{\mathop{\bf Card}}
\newcommand{\Tr}{\mathop{\bf Tr}}
\newcommand{\diag}{\mathop{\bf diag}}
\newcommand{\idm}{\mathbf{I}}
\newcommand{\QED}{~~\rule[-1pt]{6pt}{6pt}}
\newcommand{\argmin}{\mathop{\rm argmin}}
\newcommand{\argmax}{\mathop{\rm argmax}}
\DeclareMathOperator{\x}{x}
\DeclareMathOperator{\y}{y}
\DeclareMathOperator{\EE}{\mathcal{E}}
\DeclareMathOperator{\UU}{\mathcal{U}}
\DeclareMathOperator{\GG}{\mathcal{G}}
\DeclareMathOperator{\PP}{\mathcal{P}}
\DeclareMathOperator{\loss}{\mathbf{loss}}
\DeclareMathOperator{\Loss}{\mathbf{Loss}}
\DeclareMathOperator{\Vect}{\text{Vec}}
\DeclareRobustCommand{\stirling}{\genfrac\{\}{0pt}{}}
\title{Learning with Clustering Structure}
\begin{document}

\maketitle

\begin{abstract}
We study supervised learning problems using clustering constraints to impose structure on either features or samples, seeking to help both prediction and interpretation. The problem of clustering features arises naturally in text classification for instance, to reduce dimensionality by grouping words together and identify synonyms. The sample clustering problem on the other hand, applies to multiclass problems where we are allowed to make multiple predictions and the performance of the best answer is recorded. We derive a unified optimization formulation highlighting the common structure of these problems and produce algorithms whose core iteration complexity amounts to a k-means clustering step, which can be approximated efficiently. We extend these results to combine sparsity and clustering constraints, and develop a new projection algorithm on the set of clustered sparse vectors. We prove convergence of our algorithms on random instances, based on a union of subspaces interpretation of the clustering structure. Finally, we test the robustness of our methods on artificial data sets as well as real data extracted from movie reviews.
\end{abstract}

\section{Introduction}
Adding structural information to supervised learning problems can significantly improve prediction performance. Sparsity for example has been proven to improve statistical and practical performance \citep{Bach2012}. Here, we study clustering constraints that seek to group either features or samples, to both improve prediction and provide additional structural insights on the data. 

When there exists some groups of highly correlated features for instance, reducing dimensionality by assigning uniform weights inside each distinct group of features can be beneficial both in terms of prediction and interpretation \citep{bondell08} by significantly reducing dimension. This often occurs in text classification for example, where it is natural to group together words having the same meaning for a given task \citep{dhillon2003divisive,jiang2011fuzzy}.

On the other hand, learning a unique predictor for all samples can be too restrictive. For recommendation systems for example, users can be partitioned in groups, each having different tastes. Here, we study how to learn a partition of the samples that achieves the best within-group prediction \citep{guzman14,Zhang03}

These problems can of course be tackled by grouping synonyms or clustering samples in an unsupervised preconditioning step. However such partitions might not be optimized or relevant for the prediction task. Prior hypotheses on the partition can also be added as in Latent Dirichlet Allocation \citep{Blei2003} or Mixture of Experts \citep{Jordan94}.  We present here a unified framework that highlights the clustered structure of these problems without adding prior information on these clusters. While constraining the predictors, our framework allows the use of any loss function for the prediction task. We propose several optimization schemes to solve these problems efficiently. 

First, we formulate an explicit convex relaxation which can be solved efficiently using the conditional gradient algorithm \citep{frank1956algorithm,jaggi2013revisiting}, where the core inner step amounts to solving a clustering problem. We then study an approximate projected gradient scheme similar to the Iterative Hard Thresholding (IHT) algorithm \citep{blumensath2009iterative} used in compressed sensing. While constraints are non-convex, projection on the feasible set reduces to a clustering subproblem akin to k-means. In the particular case of feature clustering for regression, the k-means steps are performed in dimension one, and can therefore be solved exactly by dynamic programming \citep{bellman1973note,wang2011ckmeans}. When a sparsity constraint is added to the feature clustering problem for regression, we develop a new dynamic program that gives the exact projection on the set of sparse and clustered vectors.

We provide a theoretical convergence analysis of our projected gradient scheme generalizing the proof made for IHT. Although our structure is similar to sparsity, we show that imposing a clustered structure, while helping interpretability, does not allow us to significantly reduce the number of samples, as in the sparse case for example. 

Finally, we describe experiments on both synthetic and real datasets involving large corpora of text from movie reviews. The use of k-means steps makes our approach fast and scalable while comparing very favorably with standard benchmarks and providing meaningful insights on the data structure.

\section{Learning \& clustering features or samples}\label{sec:pbFormulation}
Given $n$ sample points represented by the matrix $X = (x_1,\ldots,x_n)^T \in \reals^{n\times d}$ and corresponding labels $y = (y_1,\ldots,y_n)$, real or nominal depending on the task (classification or regression), we seek to compute linear predictors represented by $W$. Clustering features or samples is done by constraining $W$ and our problems take the generic form
\BEQ
\BA{ll}
\mbox{minimize} & \Loss(y,X,W) + R(W) \\
\mbox{subject to} & W \in \mathcal{W}, \nonumber
\EA
\EEQ
in the prediction variable $W$, where $\Loss(y,X,W)$ is a learning loss (for simplicity, we consider only squared or logistic losses in what follows), $R(W)$ is a classical regularizer and $\mathcal{W}$ encodes the clustering structure.
 
The clustering constraint partitions features or samples into $Q$ groups $\GG_1,\ldots,\GG_Q$ of size $s_1,\ldots,s_Q$ by imposing that all features or samples within a cluster $\GG_q$ share a common predictor vector or coefficient $v_q$, solving the supervised learning problem. To define it algebraically we use a matrix $Z$ that assigns the features or the samples to the $Q$ groups, \ie~$Z_{iq} = 1$ if feature or sample $i$ is in group $\mathcal{G}_q$ and $0$ otherwise. 
Denoting $V = (v_1,\ldots,v_Q)$, the prediction variable is decomposed as $W = ZV$ leading to the supervised learning problem with clustering constraint
\BEQ\label{eq:gen_form}
\BA{ll}
\mbox{minimize} &  \Loss(y,X,W) + R(W)  \\ 
\mbox{subject to} & W = ZV,\; Z \in \{0,1\}^{m \times Q}, \; Z \ones = \ones,
\EA\EEQ
in variables $W$, $V$ and $Z$ whose dimensions depend on whether features ($m=d$) or samples ($m=n$) are clustered.

Although this formulation is non-convex, we observe that the core non-convexity emerges from a clustering problem on the predictors $W$, which we can deal with using k-means approximations, as detailed in Section~\ref{sec:algo}. We now present in more details two key applications of our formulation: dimensionality reduction by clustering features and learning experts by grouping samples. We only detail regression formulations, extensions for classification are given in the Appendix~\ref{sec:classif}. Our framework also applies to clustered multitask as a regularization hypothesis, and we refer the reader to the Appendix~\ref{sec:multitask} for more details on this formulation.
  
\subsection{Dimensionality reduction: clustering features}\label{ss:cfeat}
Given a prediction task, we want to reduce dimensionality by grouping together features which have a similar influence on the output \citep{bondell08}, e.g. synonyms in a text classification problem. The predictor variable $W$ is here reduced to a single vector, whose coefficients take only a limited number of values. In practice, this amounts to a quantization of the classifier vector, supervised by a learning loss. 

Our objective is to form $Q$ groups of features~$\mathcal{G}_1,\ldots,\mathcal{G}_Q$, assigning a unique weight $v_q$ to all features in group $\mathcal{G}_q$. In other words, we search a predictor $w \in \reals^d$ such that $w_j = v_q$ for all $j \in \mathcal{G}_q$. This problem can be written
\BEQ\label{eq:feat}
\BA{ll}
\mbox{minimize} & \frac{1}{n}\sum_{i=1}^n \loss\left(y_i,w^T x_i \right) + \frac{\lambda}{2} \|w\|_2^2 \\
\mbox{subject to} & w = Zv, \, Z \in \{0,1\}^{d\times Q},\, Z\ones = \ones, 
\EA
\EEQ
in the variables $w \in \reals^{d}$, $v\in \reals^{Q}$ and $Z$. In what follows, $\loss(y_i,w^Tx_i)$ will be a squared or logistic loss that measures the quality of prediction for each sample. Regularization can either be seen as a standard $l_2$ regularization on $w$ with $R(w)= \frac{\lambda}{2}\|w\|^2_2 $, or a weighted regularization on $v$, $R(v) = \frac{\lambda}{2}\sum_{q=1}^Q s_q \|v_q\|_2^2$.

Note that fused lasso \citep{Tibshirani05} in dimension one solves a similar problem that also quantizes the regression vector using an $\ell_1$ penalty on coefficient differences. The crucial difference with our setting is that fused lasso assumes that the variables are ordered and minimizes the total variation of the coefficient vector. Here we do not make any ordering assumption on the regression vector.

\subsection{Learning experts: clustering samples}\label{ss:csamp} 
Mixture of experts \citep{Jordan94} is a standard model for prediction that seeks to learn $Q$ predictors called ``experts", each predicting labels for a different group of samples. For a new sample $x$ the prediction is then given by a weighted sum of the predictions of all experts $\hat y = \sum_{q=1}^Q p_q v_q^Tx$. The weights $p_q$ are given by a prior probability depending on $x$. Here, we study a slightly different setting where we also learn $Q$ experts, but assignments to groups are only extracted from the labels $y$ and not based on the feature variables $x$ as illustrated by the graphical model in Figure~\ref{fig:graphicalModel}.

\begin{figure}[h!]
\begin{center}
\includegraphics{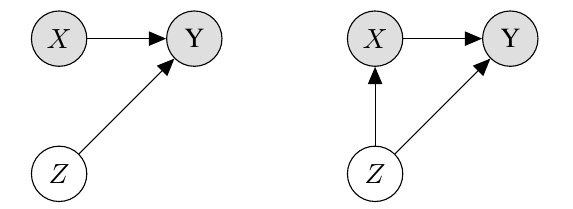}
\caption{Learning multiple diverse experts (\emph{left}), mixture of experts model (\emph{right}). The assignment matrix $Z$ gives the assignment to groups, grey variables are observed, arrows represent dependance of variables.
\label{fig:graphicalModel} }
\end{center}
\end{figure}
This means that while we learn several experts (classifier vectors), the information contained in the features $x$ is not sufficient to select the best experts. Given a new point $x$ we can only give $Q$ diverse answers or an approximate weighted prediction $\hat y = \sum_{q=1}^Q \frac{s_q}{n}v_q^Tx$. Our algorithm will thus return several answers and minimizes the loss of the best of these answers. This setting was already studied by \citet{Zhang03} for general predictors, it is also related to subspace clustering \citep{Elhamifar09}, however here we already know in which dimension the data points lie. 

Given a prediction task, our objective is to find $Q$ groups $\mathcal{G}_1,\ldots,\mathcal{G}_Q$ of sample points to maximize within-group prediction performance. Within each group $\mathcal{G}_q$, samples are predicted using a common linear predictor $v_q$. Our problem can be written
\BEQ\label{eq:cl-smp}
\mbox{minimize}~ \frac{1}{n}\sum_{q=1}^Q\sum_{i \in \mathcal{G}_q} \loss\left(y_i,v_q^T x_i \right) + \frac{\lambda}{2} \sum_{q=1}^Q s_q \|v_q\|^2_2 
\EEQ
in the variables $V = (v_1,\ldots,v_Q) \in \reals^{d \times Q}$ and $\mathcal{G} = (\mathcal{G}_1,\ldots,\mathcal{G}_Q)$ such that $\mathcal{G}$ is a partition of the $n$ samples. As in the problem of clustering features above, $\loss(y_i,v_q^Tx_i)$ measures the quality of prediction for each sample and $R(V) =\frac{\lambda}{2} \sum_{q=1}^Q s_q \|v_q\|^2_2$ is a weighted regularization. Using an assignment matrix $Z \in \{0,1\}^{n\times Q}$ and an auxiliary variable $W = (w_1,\ldots,w_n) \in \reals^{d \times n}$ such that $W = VZ^T$, which means $w_i=v_q$ if $i \in \mathcal{G}_q$, problem~\eqref{eq:cl-smp} can be rewritten
\BEQ
\BA{ll}
\mbox{minimize} & \frac{1}{n}\sum_{i = 1}^n \loss\left(y_i,w_i^T x_i \right) + \frac{\lambda}{2} \sum_{i=1}^n  \|w_i\|^2_2 \\
\mbox{subject to} & W^T = ZV^T, \, Z \in \{0,1\}^{n\times Q}, \, Z\ones = \ones, \nonumber
\EA
\EEQ
in the variables $W \in \reals^{d\times n}$, $V \in \reals^{d\times Q}$ and $Z$. Once again, our problem fits in the general formulation given in~\eqref{eq:gen_form} and in the sections that follows, we describe several algorithms to solve this problem efficiently.

\section{Approximation algorithms}
\label{sec:algo}
We now present optimization strategies to solve learning problems with clustering constraints. We begin by simple greedy procedures and a more refined convex relaxation solved using approximate conditional gradient. We will show that this latter relaxation is exact in the case of feature clustering because the inner one dimensional clustering problem can be solved exactly by dynamic programming.

\subsection{Greedy algorithms}
For both clustering problems discussed above, greedy algorithms can be derived to handle the clustering objective. A straightforward strategy to group features is to first train predictors as in a classical supervised learning problem, and then cluster weights together using k-means. In the same spirit, when clustering sample points, one can alternate minimization on the predictors of each group and assignment of each point to the group where its loss is smallest. These methods are fast but unstable and highly dependent on initialization. However, alternating minimization can be used to refine the solution of the more robust algorithms proposed below.

\subsection{Convex relaxation using conditional gradient algorithm\label{sec:cvxrelax}}
Another approach is to relax the problem by considering the convex hull of the feasible set and use the conditional gradient method (a.k.a.~Frank-Wolfe, \citep{frank1956algorithm,jaggi2013revisiting}) on the relaxed convex problem. Provided that an affine minimization oracle can be computed efficiently, the key benefit of using this method when minimizing a convex objective over a non-convex set is that it automatically solves a convex relaxation, i.e. minimizes the convex objective over the convex hull of the feasible set, without ever requiring this convex hull to be formed explicitly.

In our case, the convex hull of the set $\{W : W = ZV,\, Z \in \{0,1\}^{m \times Q},\, Z\ones = \ones \}$ is the entire space so the relaxed problem loses the initial clustering structure. However in the special case of a squared loss, \ie~$\loss(y,\hat y)=\frac{1}{2}(y-\hat y)^2$, minimization in $V$ can be performed analytically and our problem reduces to a clustering problem for which this strategy is relevant. We illustrate this simplification in the case of clustering features for a regression task, detailed computations and explicit procedures for other settings are given in Appendix~\ref{sec:cvx_relax_other}. 

Replacing $w = Zv$ in \eqref{eq:feat}, the objective function in problem~\eqref{eq:feat} becomes
\BEAS
\phi(v,Z) & = & \frac{1}{2n}\sum_{i=1}^n \left(y_i- (Zv)^T \x_i \right)^2 + \frac{\lambda}{2}\|Zv\|_2^2 \\
& = &\frac{1}{2n}v^TZ^TX^TXZv +\frac{\lambda}{2}v^TZ^TZv - \frac{1}{n}y^TXZv + \frac{1}{2n}y^Ty .
\EEAS
Minimizing in $v$ and using the Sherman-Woodbury-Morrison formula we then get
\BEAS
\min_v \phi(v,Z) & = &\frac{1}{2n}y^T\left(\idm-XZ(Z^TX^TXZ + \lambda n Z^TZ)^{-1}Z^TX^T \right)y \\
 & = & \frac{1}{2n} y^T\left(\idm+\frac{1}{n\lambda}XZ(Z^TZ)^{-1}Z^TX^T \right)^{-1}y,
\EEAS
and the resulting clustering problem is then formulated in terms of the normalized equivalence matrix 
\[
M = Z(Z^TZ)^{-1}Z^T
\] 
such that $M_{ij} = {1}/{s_q}$ if item $i$ and $j$ are in the same group $\mathcal{G}_q$ and $0$ otherwise. 

Writing $\mathcal{M} = \{M : M =Z(Z^TZ)^{-1}Z^T, \: Z \in \{0,1\}^{d \times Q}, \:  Z\ones =\ones\} $ the set of equivalence matrices for partitions into at most $Q$ groups, our partitioning problem can be written
\BEQ\BA{ll}
\mbox{minimize} & \psi(M) \triangleq y^T\left(\idm+\frac{1}{n\lambda}XMX^T \right)^{-1} y \\ 
\mbox{subject to} & M \in \mathcal{M}. \nonumber
\EA\EEQ
in the matrix variable $M\in\symm_n$. 
We now relax this last problem by solving it (implicitly) over the convex hull of the set of equivalence matrices using the conditional gradient method. Its generic form is described in Algorithm~\eqref{algo:GenCondGrad}, where the scalar product is the canonical one on matrices, \ie~ $\langle A,B \rangle = \Tr(A^TB)$. At each iteration, the algorithm requires solving an linear minimization oracle over the feasible set. This gives the direction for the next step and an estimated gap to the optimum which is used as stopping criterion.

\begin{algorithm}[H]
\caption{Conditional gradient algorithm}
\label{algo:GenCondGrad}
\begin{algorithmic}
\STATE Initialize $M_0 \in \mathcal{M}$ 
\FOR{$t = 0,\dots,T$}
	\STATE Solve linear minimization oracle
	\BEQ
	\Delta_t  =  \argmin_{N\in \text{hull}(\mathcal{M})} \left\langle N,\nabla \psi(M_t) \right\rangle \label{eq:oracle}
	\EEQ
	\IF{ gap$(M_t, M_*)  \leq \epsilon $} 
	\RETURN $M_t$ 
	\ELSE 
	\STATE Set $M_{t+1} = M_t + \alpha_t (\Delta_t - M_t)$
	\ENDIF
\ENDFOR
\end{algorithmic}
\end{algorithm} 
The estimated gap is given by the linear oracle as 
\[
\text{gap}(M_t, M_*) \triangleq - \langle \Delta_t-M_t, \nabla \psi(M_t)\rangle.
\]
By definition of the oracle and convexity of the objective function, we have
\[
-\langle \Delta_t-M_t, \nabla \psi(M_t) \rangle \geq -\langle M_*-M_t, \nabla \psi(M_t) \rangle \geq \psi(M_t) - \psi(M_*).
\]
Crucially here, the linear minimization oracle in~\eqref{eq:oracle} is equivalent to a projection step. This projection step is itself equivalent to a k-means clustering problem which can be solved exactly in the feature clustering case and well approximated in the other scenarios detailed in the appendix. For a fixed matrix $M \in \text{hull}(\mathcal{M})$, we have that
\[
P\triangleq -\nabla \psi(M) =\frac{1}{2n^2\lambda} X^T(\idm+\frac{1}{n \lambda}{X}MX^T)^{-1}\y{\y}^T(\idm+\frac{1}{n \lambda}{X}MX^T)^{-1}{X}
\]
is positive semidefinite (this is the case for all the settings considered in this paper). Writing $P^{\frac{1}{2}}$ its matrix square root we get 
\BEAS
\argmin_{N \in \text{hull}(\mathcal{M})} \langle N, \nabla \psi(M) \rangle  & =  & \argmin_{N \in \mathcal{M}} \Tr(N^T\nabla \psi(M)) \\
& = & \argmin_{N \in \mathcal{M}} - \Tr(NP^{\frac{1}{2}}{P^{\frac{1}{2}}}^T) \\
 &  = &\argmin_{N \in \mathcal{M}} \Tr((\idm-N)P^{\frac{1}{2}}{P^{\frac{1}{2}}}^T)) \\
 &  = &\argmin_{N \in \mathcal{M}} \| P^{\frac{1}{2}} - NP^{\frac{1}{2}}\|_F^2\\
 &  = &\argmin_{Z} \min_{V} \|P^{\frac{1}{2}} -ZV \|_F^2,
\EEAS
because $N$ is an orthonormal projection ($N^2 =N$, $N^T=N$) and so is $(I-N)$. Given a matrix $W$, we also have
\BEA\label{eq:kmeans}
\argmin_{Z,V} \|W-ZV\|_F^2 = \argmin \sum_{q=1}^Q \sum_{i\in \mathcal{G}_q} \|w_i - v_q \|_2^2,
\EEA
where the minimum is taken over centroids $v_q$ and partition $(\mathcal{G}_1,\ldots,\mathcal{G}_Q)$.
This means that computing the linear minimization oracle on $\nabla \psi(M)$ is equivalent to solving a k-means clustering problem on $P^{1/2}$. This k-means problem can itself be solved approximately using the k-means++ algorithm which performs alternate minimization on the assignments and the centroids after an appropriate random initialization. Although this is a non-convex subproblem, k-means++ guarantees a constant approximation ratio on its solution \citep{arthur2007}. We write k-means$(V,Q)$ the approximate solution of the projection. Overall, this means that the linear minimization oracle~\eqref{eq:oracle} can therefore be computed approximately. Moreover, in the particular case of grouping features for regression, the k-means subproblem is one-dimensional and can be solved exactly using dynamic programming \citep{bellman1973note,wang2011ckmeans} so that convergence of the algorithm is ensured.

The complete method is described as Algorithm~\ref{algo:condGrad} where we use the classical stepsize for conditional gradient $ \alpha_t = \frac{2}{t+2}$. A feasible solution for the original non-convex problem is computed from the solution of the relaxed problem using Frank-Wolfe rounding, \ie~ output the last linear oracle.

\begin{algorithm}[H]
\caption{Conditional gradient on the equivalence matrix}
\label{algo:condGrad}
\begin{algorithmic}
\REQUIRE $X,y, Q, \epsilon$
\STATE Initialize $M_0 \in \mathcal{M}$ 
\FOR{$t = 0,\ldots,T$}
	\STATE Compute the matrix square root $P^{\frac{1}{2}}$  of $-\nabla \psi(M_0)$
	\STATE Get oracle $\Delta_t = \text{k-means}(P^{\frac{1}{2}},Q)$
	\IF{$-\Tr (\Delta_t-M_t)^T \nabla \psi(M_t) \leq \epsilon$} 
		\RETURN $M_t$ 
	\ELSE 
	\STATE Set $M_{t+1} = M_t + \alpha_t (\Delta_t - M_t)$
	\ENDIF
\ENDFOR
\STATE $Z^*$ is given by the last k-means 
\STATE $V^*$ is given by the analytic solution of the minimization for $Z^*$ fixed
\ENSURE $V^*, Z^*$
\end{algorithmic}
\end{algorithm}

\subsection{Complexity}
The core complexity of Algorithm~\ref{algo:condGrad} is concentrated in the inner k-means subproblem, which standard alternating minimization approximates at cost $O(t_KQp)$, where $t_K$ is the number of alternating steps, $Q$ is the number of clusters, and $p$ is the product of the dimensions of $V$. However, computation of the gradient requires to invert matrices and to compute a matrix square root of the gradient at each iteration, which can slow down computations for large datasets. The choice of the number of clusters can be done given an a priori on the problem (\eg~knowing the number of hidden groups in the sample points), or cross-validation, idem for the other regularization parameters.

\section{Projected Gradient algorithm \label{sec:theory}}
In practice, convergence of the conditional gradient method detailed above can be quite slow and we also study a projected gradient algorithm to tackle the generic problem in~\eqref{eq:gen_form}. Although simple and non-convex in general, this strategy used in the context of sparsity can produce scalable and convergent algorithms in certain scenarios, as we will see below.

\subsection{Projected gradient}
We can exploit the fact that projecting a matrix $W$ on the feasible set 
\[
\{\tilde{W} : \tilde{W} = ZV,\, Z \in \{0,1\}^{m \times Q}, \, Z\ones = \ones\}
\]
is equivalent to a clustering problem, with
\[
\argmin_{Z,V} \|W-ZV\|_F^2 = \argmin \sum_{q=1}^Q \sum_{i\in \mathcal{G}_q} \|w_i - v_q \|_2^2,
\]
where the minimum is taken over centroids $v_q$ and partition $(\mathcal{G}_1,\ldots,\mathcal{G}_Q)$. The k-means problem can be solved approximately with the k-means++ algorithm as mentioned in Section~\ref{sec:cvxrelax}.
We will analyze this algorithm for clustering features for regression in which the projection can be found exactly. Writing k-means$(V,Q)$ the approximate solution of the projection, $\phi$ the objective function and $\alpha_t$ the stepsize, the full method is summarized as Algorithm~\ref{algo:ICLHCL} and its implementation is detailed in Section~\ref{sec:impcomp}.

  \begin{algorithm}[H]
    \caption{Proj. Gradient Descent \label{algo:ICLHCL}}
    \begin{algorithmic}
	\REQUIRE $X,y, Q, \epsilon$
	\STATE Initialize $W_0 = 0$ 
	\WHILE{$|\phi(W_t) - \phi(W_{t-1})| \geq \epsilon $} 
	\STATE $W_{t+\frac{1}{2}} = W_t - \alpha_t(\nabla \Loss(y,X,W_t) + \nabla R(W_t))$
	\STATE $[Z_{t+1},V_{t+1}] = \text{k-means}(W_{t+\frac{1}{2}},Q)$
	\STATE $W_{t+1} = Z_{t+1}V_{t+1}$
	\ENDWHILE
	\STATE $Z^*$ and $V^*$ are given through k-means
	\ENSURE $W^*,Z^*,V^*$	
	\end{algorithmic}
  \end{algorithm}
  
\subsection{Convergence}
We now analyze the convergence of the projected gradient algorithm with a constant stepsize $\alpha_t =1$, applied to the feature clustering problem for regression. We focus on a problem with squared loss without regularization term, which reads
\BEQ\label{eq:least}
\BA{ll}
\mbox{minimize} & \frac{1}{2n}\|Xw-y\|^2_2 \\ 
\mbox{subject to} &  w = Zv, \, Z \in \{0,1\}^{d \times Q},\, Z\ones = \ones \nonumber
\EA\EEQ
in the variables $w \in \reals^d$, $v\in \reals^Q$ and $Z$. We assume that the regression values $y$ are generated by a linear model whose coefficients $w^*$ satisfy the constraints above, up to additive noise, with
\[ 
y = Xw^* + \eta 
\]
where $\eta \sim \mathcal{N}(0,\sigma^2)$. Hence we study convergence of our algorithm to $w^*$, \ie~ to the partition $\GG^*$ of its coefficients and its $Q$ values. 

We will exploit the fact that each partition $\GG$ defines a subspace of vectors $w$, so the feasible set can be written as a union of subspaces. Let $\GG$ be a partition and define
\[
\UU_{\GG} = \{w : w = Zv,\:  Z \in \mathcal{Z}(\GG) \},
\]
where $\mathcal{Z}(\GG)$ is the set of assignment matrices corresponding to $\GG$. Since permuting the columns of $Z$ together with the coefficients of $v$ has no impact on $w$, the matrices in $\mathcal{Z}(\GG)$ are identical up to a permutation of their columns. So, for $Z \in \mathcal{Z}(\GG)$, $\mathcal{Z}(\GG) =\{ Z\Pi, \Pi \: \text{permutation matrix}\}$, therefore $\UU_{\GG}$ is a subspace and the corresponding assignment matrices are its different basis. 

To a feasible vector $w$, we associate the partition $\GG$ of its values that has the least number of groups. This partition and its corresponding subspace are uniquely defined and, denoting $\PP$ the set of partitions in at most~$Q$ clusters, our problem \eqref{eq:least} can thus be written 
\BEQ\label{eq:union}
\BA{ll}
\mbox{minimize} & \frac{1}{2n}\|Xw-y\|^2_2 \\ 
\mbox{subject to} &  w \in \bigcup_{\GG\in \PP} \UU_{\GG}. \nonumber
\EA\EEQ
where the variable $w \in \reals^d$ belongs to a union of subspaces $\UU_{\GG}$.

We will write the projected gradient algorithm for \eqref{eq:union} as a fixed point algorithm whose contraction factor depends on the singular values of the design matrix $X$ on collections of subspaces generated by the partitions~$\GG$. We only need to consider largest subspaces in terms of inclusion order, which are the ones generated by the partitions into exactly $Q$ groups. Denoting $\PP_Q$ this set of partitions, the collections of subspaces are defined as
\BEAS
\EE_1 & = & \{ \UU_{\GG},\, \GG \in \PP_Q \}, \\
\EE_2 & = & \{ \UU_{\GG_1} + \UU_{\GG_2},\, (\GG_1,\GG_2) \in \PP_Q \}, \\
\EE_3 & = & \{ \UU_{\GG_1} + \UU_{\GG_2} + \UU_{\GG_3},\, (\GG_1,\GG_2,\GG_3) \in \PP_Q\}.
\EEAS
Our main convergence result follows. Provided that the contraction factor is sufficient, it states the convergence of the projected gradient scheme to the original vector up to a constant error of the order of the noise.
\begin{proposition}\label{prop:conv}
Given that projection on $\bigcup_{\GG\in \PP} \UU_{\GG}$ is well defined, the projected gradient algorithm applied to \eqref{algo:ICLHCL} converges to the original $w^*$ as
\[ \|w^*-w_t\|_2 \leq \rho^t \|w^*\|_2 + \frac{1-\rho^t}{1-\rho} \nu  \|\eta \|_2 \label{eq:conv},\]
where 
\BEAS
\rho & \triangleq & 2 \max_{\UU \in \EE_3} \| I-\frac{1}{n}\Pi_{\UU}^TX^TX\Pi_{\UU} \|_2 \\
\nu & \triangleq & \frac{2}{n}\max_{\UU \in \EE_2} \|X\Pi_{\UU}\|_2
\EEAS
and $\Pi_{\UU}$ is any orthonormal basis of the subspace $\mathcal{U}$.
\end{proposition}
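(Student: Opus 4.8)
The plan is to run the standard Iterative Hard Thresholding argument, but to track the union-of-subspaces structure carefully so that the signal and noise terms land in the right collections $\EE_3$ and $\EE_2$ respectively. First I would write one iteration explicitly: with $\alpha_t = 1$ the half-step is $w_{t+\frac12} = w_t - \frac1n X^T(Xw_t - y)$, and substituting the model $y = Xw^* + \eta$ gives $w_{t+\frac12} - w^* = (\idm - \frac1n X^TX)(w_t - w^*) + \frac1n X^T\eta$. The update $w_{t+1}$ is the projection of $w_{t+\frac12}$ onto $\bigcup_{\GG} \UU_{\GG}$, which in the one-dimensional feature clustering case is exact by dynamic programming, and since $w^*$ is feasible the projection inequality $\|w_{t+\frac12} - w_{t+1}\|_2^2 \le \|w_{t+\frac12} - w^*\|_2^2$ holds. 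Expanding it and setting $u = w_{t+\frac12} - w^*$, $v = w_{t+1} - w^*$ yields the basic inequality $\|v\|_2^2 \le 2\langle u, v\rangle$.

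The key geometric step is to identify which subspaces each vector inhabits. Let $\GG^*, \GG_t, \GG_{t+1} \in \PP_Q$ be the partitions associated with $w^*$, $w_t$, $w_{t+1}$, enlarged if necessary to exactly $Q$ groups, which only grows the corresponding subspaces. Then $v = w_{t+1} - w^*$ lies in $\UU_{\GG_{t+1}} + \UU_{\GG^*} \in \EE_2$, while both $w_t - w^*$ and $v$ lie in $\mathcal{T} := \UU_{\GG_t} + \UU_{\GG^*} + \UU_{\GG_{t+1}} \in \EE_3$. I would split $\langle u, v\rangle$ into a signal part $\langle (\idm - \frac1n X^TX)(w_t - w^*),\, v\rangle$ and a noise part $\frac1n\langle X^T\eta,\, v\rangle$. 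For the signal part, inserting the orthogonal projector $\Pi_{\mathcal{T}}\Pi_{\mathcal{T}}^T$ (legitimate since both arguments lie in $\mathcal{T}$) and using $\Pi_{\mathcal{T}}^T\Pi_{\mathcal{T}} = \idm$ bounds it by $\frac{\rho}{2}\|w_t - w^*\|_2\|v\|_2$, because $\|\idm - \frac1n\Pi_{\mathcal{T}}^TX^TX\Pi_{\mathcal{T}}\|_2 \le \rho/2$ by the definition of $\rho$ over $\EE_3$. For the noise part, I would rewrite $\langle X^T\eta, v\rangle = \langle \eta, Xv\rangle$ and use $v \in \EE_2$ to bound it by $\frac1n\max_{\UU\in\EE_2}\|X\Pi_{\UU}\|_2\|\eta\|_2\|v\|_2 = \frac{\nu}{2}\|\eta\|_2\|v\|_2$.

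Plugging these two estimates into $\|v\|_2^2 \le 2\langle u, v\rangle$ and dividing by $\|v\|_2$ gives the one-step contraction $\|w_{t+1} - w^*\|_2 \le \rho\|w_t - w^*\|_2 + \nu\|\eta\|_2$. Finally I would unroll this recursion from the initialization $w_0 = 0$, for which $\|w_0 - w^*\|_2 = \|w^*\|_2$, and sum the geometric series $\sum_{k=0}^{t-1}\rho^k \nu\|\eta\|_2 = \frac{1-\rho^t}{1-\rho}\nu\|\eta\|_2$ to recover the stated bound.

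The main obstacle is the bookkeeping of subspaces rather than any delicate estimate: one must verify that $w_{t+1} - w^*$ genuinely involves only two subspaces, so the noise is controlled by the $\EE_2$ constant $\nu$ rather than the weaker $\EE_3$ quantity, and that replacing partitions with fewer than $Q$ groups by partitions into exactly $Q$ groups is harmless because it only enlarges the relevant subspace. I would also keep the stated hypotheses in force, namely that the projection on $\bigcup_{\GG} \UU_{\GG}$ is well defined and, in this one-dimensional feature-clustering setting, computed exactly so that the projection inequality is valid; convergence of course additionally requires the proviso $\rho < 1$, i.e. that the contraction factor is sufficient.
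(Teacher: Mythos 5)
Your proof is correct and arrives at exactly the paper's per-iteration recursion $\|w^*-w_{t+1}\|_2 \le \rho\|w^*-w_t\|_2 + \nu\|\eta\|_2$ with the same constants, but it extracts the contraction by a different algebraic route. The paper works with norms: it writes $\|w^*-w_{t+1}\|_2 = \|P_{t+1,*}(w^*-w_{t+1})\|_2$, splits by the triangle inequality into $\|P_{t+1,*}(w^*-w_{t+1/2})\|_2 + \|P_{t+1,*}(w_{t+1/2}-w_{t+1})\|_2$, and uses a Pythagorean decomposition of the projection inequality to dominate the second term by the first, so its factor $2$ comes from the triangle inequality and $\rho$, $\nu$ arise as operator norms $2\|P_{t+1,*}(I-\tfrac{1}{n}X^TX)P_{t,*}\|_2$ and $\tfrac{2}{n}\|P_{t+1,*}X^T\|_2$ of matrices sandwiched between projectors, which are then bounded over $\EE_3$ and $\EE_2$. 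You instead expand the square in the projection inequality to get $\|v\|_2^2 \le 2\langle u,v\rangle$ with $u = w_{t+1/2}-w^*$ and $v = w_{t+1}-w^*$, split the inner product into signal and noise parts, and bound each by Cauchy--Schwarz after expressing the relevant vectors in orthonormal bases of $\mathcal{T}\in\EE_3$ and $\UU\in\EE_2$; your factor $2$ comes from the cross term. The two derivations are equivalent in substance---both hinge on restricted spectral bounds over sums of three subspaces for the signal and two subspaces for the noise, followed by the same unrolling from $w_0=0$---but yours is marginally more elementary (no Pythagoras step, no operator sandwiching), and your explicit remark that partitions with fewer than $Q$ groups can be refined to exactly $Q$ groups, which only enlarges the subspaces, makes precise a point the paper states only in passing. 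The only detail worth adding is the trivial case $w_{t+1}=w^*$, where division by $\|v\|_2$ is not allowed but the one-step bound holds vacuously.
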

\begin{proof}
To describe the algorithm we define $\GG_t$ and $\GG_*$ as the partitions associated respectively with $w_t$ and $w^*$ containing the least number of groups and
\BEQ
\def\arraystretch{1.4}
\left\{
\BA{lll}
w_{t+ 1/2} &= & w_t - \nabla \Loss(X,y,w_t) = w_t -\frac{1}{n}X^TX(w_t-w^*) + \frac{1}{n}X^T\eta \\ 
w_{t+1} &= & \argmin_{w \in \bigcup_{\GG\in \PP} \UU_{\GG}} \|w-w_{t+1/2}\|_ 2^2 \\
\UU_t &= & \UU_{\GG_t} \\
\UU_{t,*} & = & \UU_{\GG_t} + \UU_{\GG_*} \\
\UU_{t,t+1,*} & = & \UU_{\GG_t} + \UU_{\GG_{t+1}} + \UU_{\GG_*}.
\EA\nonumber
\right.
\EEQ
Orthonormal projections on $\UU_t$, $\UU_{t,*}$ and $\UU_{t,t+1,*}$ are given respectively by $P_t,P_{t,*},P_{t,t+1,*}$. Therefore by definition $w_t \in \UU_t$, $(w_t,w^*) \in \UU_{t,*}$ and $(w_t,w_{t+1},w^*) \in \UU_{t,t+1,*}$.

We can now control convergence, with
\BEQ\label{eq:conv_eq}
\BA{lll}
\|w^*-w_{t+1}\|_2 & = & \|P_{t+1,*}(w^*-w_{t+1})\|_2 \\
 & \leq & \|P_{t+1,*}(w^*-w_{t+1/2})\|_2 + \|P_{t+1,*}(w_{t+1/2} - w_{t+1})\|_2.
\EA  
\EEQ
In the second term, as $w^* \in \bigcup_{\GG\in \PP} \UU_{\GG} $ and $w_{t+1} = \argmin_{w \in \bigcup_{\GG\in \PP} \UU_{\GG}}  \|w-w_{t+1/2}\|_2$,  we have 
\[
\|w_{t+1}-w_{t+1/2}\|_2^2 \leq \|w^*-w_{t+1/2}\|_2^2
\]
which is equivalent to
\[
\|P_{t+1,*}(w_{t+1}-w_{t+1/2})\|_2^2 + \|(I-P_{t+1,*})w_{t+1/2}\|_2^2 \leq \|P_{t+1,*}(w^*-w_{t+1/2})\|_2^2 +\|(I-P_{t+1,*})w_{t+1/2}\|_2^2
\]
and this last statement implies
\[
\|P_{t+1,*}(w_{t+1}-w_{t+1/2})\|_2 \leq \|P_{t+1,*}(w^*-w_{t+1/2})\|_2.
\]
This means that we get from \eqref{eq:conv_eq}
\BEAS
\|w^*-w_{t+1}\|_2 & \leq & 2\|P_{t+1,*}(w^*-w_{t+1/2})\|_2\\
& = & 2\|P_{t+1,*}(w^* - w_t - \frac{1}{n}X^TX(w^*-w_t) - \frac{1}{n}X^T\eta) \|_2 \\
& \leq & 2\|P_{t+1,*}(I- \frac{1}{n}X^TX)(w^*-w_t)\|_2 + \frac{2}{n}\| P_{t+1,*}(X^T\eta) \|_2  \\
& = & 2\|P_{t+1,*}(I- \frac{1}{n}X^TX)P_{t,*}(w^*-w_t)\|_2 + \frac{2}{n}\| P_{t+1,*}(X^T\eta) \|_2 \\
& \leq & 2\|P_{t+1,*}(I- \frac{1}{n}X^TX)P_{t,*}\|_2 \|w^*-w_t\|_2 + \frac{2}{n}\| P_{t+1,*}X^T\|_2 \|\eta \|_2.
\EEAS
Now, assuming
\BEA
2\|P_{t+1,*}(I-\frac{1}{n}X^TX)P_{t,*}\|_2 & \leq & \rho \label{eq:rho}\\
\frac{2}{n}\|P_{t+1,*}X^T\|_2 & \leq & \nu \label{eq:nu}
\EEA
and summing the latter inequality over $t$, using that $w_0 = 0$, we get
\[ 
\|w^*-w_t\|_2 \leq \rho^t \|w^*\|_2 + \frac{1-\rho^t}{1-\rho} \nu  \|\eta \|_2 .
\]
We bound $\rho$ and $\nu$ using the information of $X$ on all possible subspaces of $\EE_2$ or $\EE_3$. For a subspace $\UU \in \EE_2$ or $\EE_3$, we define $P_{\UU}$ the orthonormal projection on it and $\Pi_{\UU}$ any orthonormal basis of it. For \eqref{eq:nu} we get
\BEAS
\|P_{t+1,*}X^T\|_2 = \|XP_{t+1,*}\|_2 & \leq & \max_{\UU \in \EE_2} \|XP_{\UU}\|_2 = \max_{\UU \in \EE_2} \|X\Pi_{\UU}\|_2,
\EEAS
which is independent of the choice of $\Pi_{\UU}$.

For \eqref{eq:rho}, using that $\UU_{t,*} \subset \UU_{t,t+1,*}$ and $\UU_{t+1,*} \subset \UU_{t,t+1,*}$, we have
\BEAS
\|P_{t+1,*}(I- X^TX)P_{t,*}\|_2 & \leq & \| P_{t,t+1,*}(I-\frac{1}{n}X^TX) P_{t,t+1,*}\|_2 \\
& \leq & \max_{\UU \in \EE_3} \| P_{\UU}(I-\frac{1}{n}X^TX) P_{\UU}\|_2  \\
& = & \max_{\UU \in \EE_3} \| \Pi_{\UU}(I-\frac{1}{n}\Pi_{\UU}^TX^TX\Pi_{\UU}) \Pi_{\UU}^T\|_2 \\
& = & \max_{\UU \in \EE_3} \| I-\frac{1}{n}\Pi_{\UU}^TX^TX\Pi_{\UU}\|_2,
\EEAS 
which is independent of the choice of $\Pi_{\UU}$ and yields the desired result.
\end{proof}
We now show  that $\rho$ and $\nu$ derive from bounds on the singular values of $X$ on the collections $\EE_2$ and $\EE_3$. Denoting $s_{min}(A)$ and $s_{max}(A)$ respectively the smallest and largest singular values of a matrix $A$, we have 
\[
\max_{\UU \in \EE_2} \|X\Pi_{\UU}\|_2 = \max_{\UU \in \EE_2} s_{max}(X\Pi_{\UU}),
\]
and assuming $\UU \in \EE_3$ and that
\[
1-\delta \leq  s_{min}\left(\frac{X\Pi_{\UU}}{\sqrt{n}}\right) \leq s_{max}\left(\frac{X\Pi_{\UU}}{\sqrt{n}}\right) \leq 1 + \delta,
\]
for some $\delta >0$, then 
\citep[Lemma 5.38]{Vers10} shows
\[
\| I-\frac{1}{n}\Pi_{\UU}^TX^TX\Pi_{\UU} \|_2 \leq 3\max\{\delta,\delta^2\}.
\]

We now show that for isotropic independent sub-Gaussian data $x_i$, these singular values depend on the number of subspaces of $\EE_1$, $N$, their dimension $D$ and the number of samples $n$. This proposition reformulates results of \citet{Vers10} to exploit the union of subspace structure.

\begin{proposition}
Let $\EE_1, \EE_2, \EE_3$ be the finite collections of subspaces defined above, let $D = \max_{\UU \in \EE_1}\dim(U)$ and $N = \Card(\EE_1)$. Assuming that the rows $x_i$ of the design matrix are $n$ isotropic independent sub-gaussian, we have 
\[
\frac{1}{\sqrt{n}}\max_{\UU \in \EE_2} \|X\Pi_{\UU}\|_2 \leq 1+\delta_2 +\epsilon 
\quad\mbox{and}\quad
\max_{\UU \in \EE_3} \| I-\frac{1}{n}\Pi_{\UU}^TX^TX \Pi_{\UU} \|_2  \leq  3\max\{\delta_3+\epsilon,(\delta_3+\epsilon)^2\},
\]
with probability larger than $1- \exp(-c\epsilon^2 n)$, where $\delta_p = C_0\sqrt{\frac{pD}{n}} + \sqrt{\frac{1+p\log(N)}{cn}}$, $\Pi_{\UU}$ is any orthonormal basis of $\UU$ and $C_0, c$ depend only on the sub-gaussian norm of the $x_i$.
\end{proposition}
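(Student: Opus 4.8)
The plan is to reduce the uniform control of the singular values of $X\Pi_\UU$ over an entire collection of subspaces to the single-subspace concentration estimates of \citet{Vers10}, and then pay for the cardinality of the collection with a union bound. First I would fix one subspace $\UU$ of dimension $k$ and observe that, since $\Pi_\UU$ has orthonormal columns, the rows of $X\Pi_\UU \in \reals^{n\times k}$ are exactly the vectors $\Pi_\UU^T x_i$, which remain isotropic ($\Pi_\UU^T\Pi_\UU = I_k$) independent sub-Gaussian, now in $\reals^k$, with the same sub-Gaussian norm. Vershynin's non-asymptotic estimate then gives, for any $t\geq 0$,
\[
1 - C_0\sqrt{\tfrac{k}{n}} - \tfrac{t}{\sqrt n} \;\leq\; s_{min}\!\left(\tfrac{X\Pi_\UU}{\sqrt n}\right) \;\leq\; s_{max}\!\left(\tfrac{X\Pi_\UU}{\sqrt n}\right) \;\leq\; 1 + C_0\sqrt{\tfrac{k}{n}} + \tfrac{t}{\sqrt n},
\]
with probability at least $1-2\exp(-ct^2)$, where $C_0,c$ depend only on the sub-Gaussian norm. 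As this bound involves only singular values of $X\Pi_\UU$, it is independent of the choice of orthonormal basis $\Pi_\UU$, as already noted.

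Next I would exploit the combinatorial structure of the collections. Every element of $\EE_p$ is a sum $\UU_{\GG_1}+\cdots+\UU_{\GG_p}$ of $p$ subspaces drawn from $\EE_1$, so $\Card(\EE_p)\leq N^p$ and each such subspace has dimension at most $pD$. Applying the single-subspace bound with $k=pD$ to each of the at most $N^p$ subspaces in $\EE_p$ and taking a union bound, the failure probability is at most $N^p\cdot 2\exp(-ct^2)$. The key tuning step is to choose $t=\sqrt{(1+p\log N)/c}+\sqrt n\,\epsilon$, so that $\tfrac{t}{\sqrt n}=\sqrt{(1+p\log N)/(cn)}+\epsilon$ and therefore $1+C_0\sqrt{pD/n}+\tfrac{t}{\sqrt n}=1+\delta_p+\epsilon$ exactly. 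Using $(a+b)^2\geq a^2+b^2$ for $a,b\geq 0$ one gets $ct^2\geq (1+p\log N)+cn\epsilon^2$, whence $N^p\cdot 2\exp(-ct^2)\leq 2e^{-1}\exp(-cn\epsilon^2)\leq\exp(-cn\epsilon^2)$, which yields the stated probability.

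Finally I would assemble the two claimed inequalities. For $\EE_2$ (take $p=2$) only the upper bound is needed, giving $\tfrac{1}{\sqrt n}\max_{\UU\in\EE_2}\|X\Pi_\UU\|_2=\tfrac{1}{\sqrt n}\max_{\UU\in\EE_2}s_{max}(X\Pi_\UU)\leq 1+\delta_2+\epsilon$. For $\EE_3$ (take $p=3$) I would use the full two-sided bound to conclude that $1-(\delta_3+\epsilon)\leq s_{min}(X\Pi_\UU/\sqrt n)\leq s_{max}(X\Pi_\UU/\sqrt n)\leq 1+(\delta_3+\epsilon)$ holds uniformly over $\EE_3$, and then invoke the consequence of \citep[Lemma 5.38]{Vers10} already quoted above, with $\delta=\delta_3+\epsilon$, to obtain $\|I-\tfrac1n\Pi_\UU^TX^TX\Pi_\UU\|_2\leq 3\max\{\delta_3+\epsilon,(\delta_3+\epsilon)^2\}$. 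I expect the only delicate point to be the bookkeeping in the union bound: one must split $t$ into a deterministic $\log N$-part that is folded into $\delta_p$ and an $\epsilon$-part that survives in the exponent, and check that the crude estimate $\Card(\EE_p)\leq N^p$ together with $(a+b)^2\geq a^2+b^2$ suffices to cancel the $N^p$ factor — everything else is a direct citation of \citet{Vers10}.
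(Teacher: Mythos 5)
Your proposal is correct and follows essentially the same route as the paper: project rows onto each fixed subspace (where they remain isotropic independent sub-Gaussian), apply Vershynin's single-matrix singular value bound, take a union bound over the collection $\EE_p$, and tune $t$ so the $\log N$ term folds into $\delta_p$ while the $\epsilon$ term survives in the exponent, finishing with \citep[Lemma 5.38]{Vers10} for the $\EE_3$ claim. The only differences are cosmetic: you bound $\Card(\EE_p)$ by $N^p$ where the paper uses $\binom{N}{p}\leq (eN/p)^p$, and your $(a+b)^2\geq a^2+b^2$ bookkeeping actually absorbs the factor $2$ in the failure probability slightly more cleanly than the paper does.
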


\begin{proof}
Let us fix $\UU \in \EE_p$, with $p=2$ or $3$ and $\Pi_{\UU}$ one of its orthonormal basis. By definition of $\EE_p$, $\dim(\UU) \leq pD$. The rows of $X\Pi_{\UU}$ are orthogonal projections of the rows of $X$ onto $\UU$, so they are still independent sub-gaussian isotropic random vectors. We can therefore apply \citep[Theorem 5.39]{Vers10} on $X\Pi_{\UU} \in \reals^{n \times \dim(\UU)}$. Hence for any $s\geq 0$, with probability at least $1-2\exp(-cs^2)$, the smallest and largest singular values of the rescaled matrix $\frac{X\Pi_{\UU}}{\sqrt{n}}$ written respectively $s_{min}(\frac{X\Pi_{\UU}}{\sqrt{n}})$ and $s_{max}(\frac{X\Pi_{\UU}}{\sqrt{n}})$ are bounded by
\BEQ \label{eq:singval}
1 - C_0\sqrt{\frac{pD}{n}} - \frac{s}{\sqrt{n}} \leq s_{min}\left(\frac{X\Pi_{\UU}}{\sqrt{n}}\right) \leq s_{max}\left(\frac{X\Pi_{\UU}}{\sqrt{n}}\right) \leq 1 + C_0\sqrt{\frac{pD}{n}} +\frac{s}{\sqrt{n}}, 
\EEQ
where $c$ and $C_0$ depend only on the sub-gaussian norm of the $x_i$. Now taking the union bound on all subsets of $\EE_p$, \eqref{eq:singval} holds for any $\UU \in \EE_p$ with probability 
\BEAS
1-2{N \choose p}\exp(-cs^2) & \geq & 1 - 2\left(\frac{eN}{p}\right)^p \exp(-cs^2) \\
& \geq & 1 - 2\exp(1+p\log(N)-cs^2).
\EEAS
Taking $s = \sqrt{\frac{1+p\log(N)}{c}} + \epsilon \sqrt{n}$, we get for all $\UU \in \EE_p$,
\BEQ 
1 - \delta_p - \epsilon \leq s_{min}\left(\frac{X\Pi_{\UU}}{\sqrt{n}}\right) \leq s_{max}\left(\frac{X\Pi_{\UU}}{\sqrt{n}}\right) \leq 1 + \delta_p + \epsilon,  \nonumber
\EEQ
with probability at least $1-\exp(-c\epsilon^2 n)$, where $\delta_p =  C_0\sqrt{\frac{pD}{n}} + \sqrt{\frac{1+p\log(N)}{cn}}$. Therefore  
\BEQ
\frac{1}{\sqrt{n}}\max_{\UU \in \EE_2} \|X\Pi_{\UU}\|_2 \leq 1+\delta_2 +\epsilon. \nonumber
\EEQ
Then \citep[Theorem 5.39]{Vers10} yields
\BEQ
\max_{\UU \in \EE_3} \| I-\frac{1}{n}\Pi_{\UU}^TX^TX \Pi_{\UU} \|_2 \leq 3\max\{\delta_3+\epsilon,(\delta_3+\epsilon)^2\}, \nonumber
\EEQ
hence the desired result.
\end{proof}

Overall here, Proposition \ref{prop:conv} shows that the projected gradient method converges when the contraction factor $\rho$ is strictly less than one. When observations $x_i$ are isotropic independent sub-gaussian, this means
\[
C_0\sqrt{\frac{3D}{n}} < \frac{1}{3} \quad\mbox{and}\quad \sqrt{\frac{1+3\log(N)}{cn}} < \frac{1}{3}
\]
which is also
\BEQ \label{eq:number}
 n =\Omega(D)  \quad\mbox{and}\quad n  = \Omega (\log(N) )
\EEQ
The first condition in~\eqref{eq:number} means that subspaces must be low-dimensional, in our case $D = 3Q$ and we naturally want the number of groups $Q$ to be small. The second condition in~\eqref{eq:number} means that the structure (clustering here) is restrictive enough, \ie~that the number of possible configurations, $N$, is small enough. 

As we show below, in the simple clustering case however, this number of subspaces is quite large, growing essentially as $Q^d$.

\begin{proposition}
The number of subspaces $N$ in $\EE_1$ is lower bounded by
\BEQ
N \geq Q^{d-Q} \nonumber
\EEQ
\end{proposition}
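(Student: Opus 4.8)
The plan is to recognize $N$ as a count of set partitions and then exhibit an explicit family of $Q^{d-Q}$ distinct ones. First I would argue that the map $\GG \mapsto \UU_{\GG}$ from $\PP_Q$ to subspaces is injective, so that counting partitions indeed lower bounds the number of subspaces $N = \Card(\EE_1)$. Concretely, given the subspace $\UU_{\GG}$, two coordinates $i,j$ lie in the same group of $\GG$ \emph{if and only if} $w_i = w_j$ for every $w \in \UU_{\GG}$: one direction is the defining constraint $w = Zv$, and the converse follows by exhibiting the indicator vector of a single group of $\GG$, which lies in $\UU_{\GG}$ and takes different values on coordinates belonging to distinct groups. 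Hence $\GG$ is recoverable from $\UU_{\GG}$, so $N = \Card(\PP_Q)$, the number of partitions of $\{1,\dots,d\}$ into exactly $Q$ nonempty groups, i.e. the Stirling number of the second kind $\stirling{d}{Q}$. It then suffices to lower bound the number of such partitions.

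Next I would build a concrete injection from the set of functions $f : \{Q+1,\dots,d\} \to \{1,\dots,Q\}$ into $\PP_Q$. The idea is to reserve the first $Q$ coordinates as group seeds, placing coordinate $q$ into group $q$ for $q = 1,\dots,Q$; this guarantees that all $Q$ groups are nonempty, so the resulting partition genuinely belongs to $\PP_Q$ (and not to a coarser collection). Each remaining coordinate $i \in \{Q+1,\dots,d\}$ is then assigned to group $f(i)$. Denoting the resulting partition $\GG_f$, there are exactly $Q^{d-Q}$ choices of $f$, and the claim reduces to showing that $f \mapsto \GG_f$ is injective.

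To verify this last injectivity, I would observe that in $\GG_f$ the group containing seed $q$ is precisely $\{q\} \cup f^{-1}(q)$. If $f \neq f'$, then $f^{-1}(q) \neq f'^{-1}(q)$ for some $q$, so the block of $\GG_f$ containing coordinate $q$ differs from the block of $\GG_{f'}$ containing coordinate $q$; since each coordinate lies in a unique block, the partitions differ, $\GG_f \neq \GG_{f'}$. Composing with the injection from the first step gives distinct subspaces $\UU_{\GG_f}$, so $N = \Card(\PP_Q) \geq Q^{d-Q}$.

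The essential combinatorial content is simply the free assignment of the $d-Q$ non-seed coordinates to $Q$ groups. The only points requiring care — and the ones I would make sure to spell out — are the two injectivity statements: recovering $\GG$ from $\UU_{\GG}$ (so that the bound is really about subspaces rather than partitions) and recovering $f$ from $\GG_f$, together with the observation that seeding forces exactly $Q$ nonempty groups so that each $\UU_{\GG_f}$ indeed lies in $\EE_1$. Neither is difficult, but both are needed to make the passage from the partition count to the subspace count rigorous.
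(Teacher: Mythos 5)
Your proof is correct, but it takes a more elementary and self-contained route than the paper. The paper's proof simply notes that $\EE_1$ is indexed by partitions of $\{1,\ldots,d\}$ into exactly $Q$ clusters, so that $N = \stirling{d}{Q}$, and then invokes a standard (cited) lower bound on Stirling numbers of the second kind, $\stirling{d}{Q} \geq \frac{1}{2}(Q^2+Q+2)Q^{d-Q-1}-1$, from which $N \geq Q^{d-Q}$ follows after a short numerical check. You instead construct the bound from scratch: the seeding injection $f \mapsto \GG_f$ from maps $f:\{Q+1,\ldots,d\}\to\{1,\ldots,Q\}$ into $\PP_Q$ is exactly the combinatorial content hiding behind the standard Stirling bound, and your verification that each block $\{q\}\cup f^{-1}(q)$ contains a unique seed makes the injectivity airtight. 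You also prove a point the paper leaves implicit: that $\GG \mapsto \UU_{\GG}$ is injective (via recovering the equivalence relation $w_i = w_j$ for all $w \in \UU_{\GG}$, separated by group indicator vectors), so that counting partitions genuinely counts subspaces. What the paper's approach buys is brevity and a matching upper bound $\stirling{d}{Q}\leq \frac{1}{2}(ed/Q)^Q Q^{d-Q}$, which it reuses later to bound $N$ in the sparse-and-clustered setting; what yours buys is a proof with no external citation and a rigorous bridge from subspaces to partitions. Both are valid.
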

\begin{proof}
$\EE_1$ is indexed by the number of partitions in exactly $Q$ clusters, \ie the Stirling number of second kind $\stirling{d}{Q}$. Standard bounds on the Stirling number of the second kind give
\BEQ \label{eq:stirling_approx}
\frac{1}{2}(Q^2+Q+2)Q^{d-Q-1}-1 \leq \stirling{d}{Q}\leq \frac{1}{2}(ed/Q)^QQ^{d-Q}.
\EEQ
hence $N \geq Q^{d-Q}$.
\end{proof}

This last proposition means that although the intrinsic dimension of our variables is of order $D= 3Q$, the number of subspaces $N$ is such that we need roughly $n \geq 3d\log(Q)$, \ie~approximately as many samples as features, so the clustering structure is  not specific enough to reduce the number of samples required by our algorithm to converge. On the other hand, given this many samples, the algorithm provably converges to a clustered output, which helps interpretation.

As a comparison, classical sparse recovery problems have the same structure \citep{Rao2012}, as $k$-sparse vectors for instance can be described as $\{w : w= Zv,\, Z^T1 =1 \}$ and so are part of a ``union of subspaces''. However in the case of sparse vectors the number of subspaces grows as $d^k$ which means recovery requires much less samples than features.

\subsection{Implementation and complexity}\label{sec:impcomp}
In our implementation we use a backtracking line search on the stepsize $\alpha_t$ that guarantees decreasing of the objective. At each iteration if
\[
\hat{W}_{t+1} = \text{k-means}\left(W_t-\alpha_t(\nabla \Loss(y,X,W_t)+\nabla R(W_t)),Q\right)
\]
decreases the objective value we take $W_{t+1} = \hat{W}_{t+1}$ and we increase the stepsize by a constant factor $\alpha_{t+1} = a\alpha_t$ with $a>1$. If $\hat{W}_{t+1}$ increases the objective value we decrease the stepsize by a constant factor $\alpha_t \leftarrow b\alpha_t$, with $b<1$, output a new $\hat{W}_{t+1}$ and iterate this scheme until $\hat{W}_{t+1}$ decreases the objective value or the stepsize reaches a stopping value $\epsilon$. We observed better results with this line search than with constant stepsize, in particular when the number of samples for clustering features is small.

Complexity of Algorithm~\ref{algo:ICLHCL} is measured by the cost of the projection and the number of iterations until convergence. If approximated by k-means++ the projection step costs $O(t_KQp)$, where $t_K$ is the number of alternating steps, $Q$ is the number of clusters, and $p$ is the product of the dimensions of $V$. When clustering features for regression, the dynamic program of \citet{Zhang03} solving exactly the projection step is in $O(d^2Q)$ and ours for $k$-sparse vectors, detailed in Section~\ref{sec:proj_ksparse_Qcluster}, is in $O(k^2Q)$. We observed convergence of the projected gradient algorithm in less than 100 iterations which makes it highly scalable. As for the convex relaxation the choice of the number of clusters is done given an a priori on the problem.

\section{Sparse and clustered linear models}
Algorithm~\ref{algo:ICLHCL} can also be applied when a sparsity constraint is added to the linear model, provided that the projection is still defined. This scenario arises for instance in text prediction when we want both to select a few relevant words and to group them to reduce dimensionality. Formally the problem of clustering features \eqref{eq:feat} becomes then 
\BEQ\label{eq:least_sparse}
\BA{ll}
\mbox{minimize} & \frac{1}{n}\sum_{i=1}^n \loss\left(y_i,w^T x_i \right) + \frac{\lambda}{2} \|w\|_2^2 \\ 
\mbox{subject to} &  w = SZv, \, S \in \{0,1\}^{d\times k},\, Z \in \{0,1\}^{k \times Q},\, S^T\ones = \ones, \, Z\ones = \ones, \nonumber
\EA\EEQ
in the variables $w \in \reals^d$, $v\in \reals^Q$, $S$ and $Z$, where $S$ is a matrix of $k$ canonical vectors which assigns nonzero coefficients and $Z$ an assignment matrix of $k$ variables in $Q$ clusters.

We develop a new dynamic program to get the projection on $k$-sparse vectors whose non-zero coefficients are clustered in $Q$ groups and apply our previous theoretical analysis to prove convergence of the projected gradient scheme on random instances.

\subsection{Projection on $k$-sparse $Q$-clustered vectors }\label{sec:proj_ksparse_Qcluster}
Let $\mathcal{W}$ be the set of $k$-sparse vectors whose non-zero values can be partitioned in at most $Q$ groups. Given $x \in \reals^d$, we are interested in its projection on $\mathcal{W}$, which we formulate as a partitioning problem. For a feasible $w \in \mathcal{W}$, with $\mathcal{G}_0 = \{ i \; : \; w_i = 0\}$ and $\mathcal{G}_1, \ldots, \mathcal{G}_{Q'}$, with $Q'\leq Q$, the partition of its non-zero values such that $w_i = v_q$ if and only if $i \in \mathcal{G}_q$, the distance between $x$ and $w$ is given by
\[
\|x-w\|_2^2 = \sum_{i \in \mathcal{G}_0}x_i ^2 + \sum_{q = 1}^{Q'} \sum_{i \in \mathcal{G}_q} (x_i-v_q)^2.
\] 
The projection is solution of 
\BEQ \label{eq:k_sparse_Q_clust}
\BA{ll}
\mbox{minimize} & \sum_{i \in \mathcal{G}_0}x_i ^2 + \sum_{q = 1}^{Q'} \sum_{i \in \mathcal{G}_q} (x_i-v_q)^2 \\
\mbox{subject to} & \Card\left(\bigcup_{q=1}^{Q'} \mathcal{G}_q\right) \leq k, \quad 0 \leq Q'\leq Q,
\EA
\EEQ
in the number of groups $Q'$, the partition $\GG =(\GG_0,\ldots,\GG_{Q'})$ of $\{1,\ldots,d\}$ and $v\in \reals^{Q'}$. For a fixed number of non-zero values $k'$, the objective is clearly decreasing in the number of groups $Q'$, which measures the degrees of freedom of the projection, however it cannot exceed $k'$. We will use this argument below to get the best parameter $Q'$.
For a fixed partition  $\GG$, minimization in $v$ gives the barycenters of the $Q'$ groups, $\mu_q = \frac{1}{s_q}\sum_{i \in \mathcal{G}_q} x_i$. Inserting them in \eqref{eq:k_sparse_Q_clust}, the objective can be developed as 
\[
\sum_{i \in \mathcal{G}_0}x_i ^2 + \sum_{q = 1}^{Q'} \sum_{i \in \mathcal{G}_q} x_i^2 + \mu_q^2 - 2\mu_q x_i  = \sum_{i=1}^d x_i^2 - \sum_{q = 1}^{Q'} s_q \mu_q^2.
\]
Splitting this objective between positive and negative barycenters, we get that the minimizer of \eqref{eq:k_sparse_Q_clust} solves  
\BEQ \label{eq:balanced_obj}
\BA{ll}
\mbox{maximize} & \sum_{q \; : \; \mu_q<0}  s_q \mu_q^2 +\sum_{q \; : \; \mu_q>0} s_q \mu_q^2 \\
\mbox{subject to} & \Card\left(\bigcup_{q=1}^{Q'} \mathcal{G}_q\right) \leq k, \quad 0 \leq Q'\leq Q,
\EA
\EEQ
in the number of groups $Q'$ and the partition $\GG =(\GG_0,\ldots,\GG_{Q'})$ of $\{1,\ldots,d\}$, where $\mu_q = \frac{1}{s_q}\sum_{i \in \mathcal{G}_q} x_i$.

We tackle this problem by finding the best balance between the two terms of the objective. We define $f_-(j,q)$ the optimal value of $\sum s_p \mu_p^2$ when picking $j$ points clustered in $q$ groups forming only negative barycenters, \ie~the solution of the problem
\BEQ\label{eq:neg_sum}\tag{$P_-(j,q)$}
\BA{ll}
\mbox{maximize} & \sum_{p = 1}^q s_p \mu_p^2  \\
\mbox{subject to} & \mu_p = \frac{1}{s_p}\sum_{i \in \mathcal{G}_p} x_i <0 \\
& \Card\left(\bigcup_{p=1}^q \mathcal{G}_p\right) = j, \nonumber
\EA
\EEQ
in the partition $\GG = \{\GG_0,\ldots,\GG_q\}$ of $\{1,\ldots,d\}$. We define $f_+(j,q)$ similarly except that it constraints barycenters to be positive. Using remark above on the parameter $Q'$, problem \eqref{eq:balanced_obj} is then equivalent to  
\BEQ\label{eq:balanced_obj_f}
\BA{ll}
\mbox{maximize} & f_-(j,q) + f_+(k'-j,Q'-q) \\
\mbox{subject to} & 0\leq k'\leq k, \; 0\leq j\leq k', \; Q' = \min(k',Q), \; 0 \leq q\leq Q', 
\EA
\EEQ
in variables $j$, $k'$ and $q$.

Now we show that $f_-$ and $f_+$ can be computed by dynamic programming, we begin with $f_-$. Remark that \eqref{eq:neg_sum} is a partitioning problem on the $j$ smallest values of $x$. To see this, let $S_-\subset\{1,\ldots,d\}$ be the optimal subset of indexes taken for \eqref{eq:neg_sum} and $i \in S_-$. If there exists $j\notin S_-$ such that $x_j\leq x_i$, then swapping $j$ and $i$ would increase the magnitude of the barycenter of the group that $i$ belongs to and so the objective. Now for \eqref{eq:neg_sum} a feasible problem, let $\mathcal{G}_1,\ldots,\mathcal{G}_q$ be its optimal partition whose corresponding barycenters are in ascending order and $x_i$ be the smallest value of $x$ in $\mathcal{G}_q$, then necessarily  $\mathcal{G}_1,\ldots,\mathcal{G}_{q-1}$ is optimal to solve $P_-(i-1,q-1)$. We order therefore the values of $x$ in ascending order and use the following dynamic program to compute $f_-$,
\BEQ\label{eq:dyn_prog}
f_-(j,q) = \max_{\substack{q\leq i \leq j \\ \mu (x_i, \ldots , x_j) <0}} f_-(i-1,q-1) + (j-i+1)\mu (x_i,\ldots , x_j )^2,
\EEQ
where $\mu(x_i,\ldots,x_j) = \frac{1}{j-i+1} \sum_{l = i}^j x_l$ can be computed in constant time using that 
\[
\mu(x_i,\ldots,x_j) = \frac{x_i +(j-i)\mu(x_{i+1},\ldots,x_j)  }{j-i+1}.
\]
By convention $f_-(j,q) =-\infty$ if \eqref{eq:dyn_prog} and so \eqref{eq:neg_sum} are not feasible. $f_-$ is initialized as a grid of $k+1$ and $Q+1$ columns such that $f_-(0,q) = 0$ for any $q$, $f_-(j,0) = 0$ and $f_-(j,1) = j\mu(x_1,\ldots,x_j)^2$ for any $j\geq 1$.
Values of $f_-$ are stored to compute \eqref{eq:balanced_obj_f}. Two auxiliary variables $I_-$ and $\mu_-$ store respectively the indexes of the smallest value of $x$ in group $\GG_q$ and the barycenter of the group $\GG_q$, defined by
\BEAS
I_-(j,q) & = & \argmax_{\substack{q\leq i \leq j \\ \mu (x_i, \ldots , x_j) <0}} f_-(i-1,q-1) + (j-i+1)\mu (x_i,\ldots,x_j)^2, \\
\mu_-(j,q) & = & \mu(x_i,\ldots,x_j),\quad i = I_-(j,q).
\EEAS
$I_-$ and $\mu_-$ are initialized by $I_-(j,1) = 1$ and $\mu_-(j,1) = \mu(x_1,\ldots,x_j)$.
The same dynamic program can be used to compute $f_+$, $I_+$ and $\mu_+$, defined similarly as $I_-$ and $\mu_-$, by reversing the order of the values of $x$. A grid search on $f(j,q,k') = f_-(j,q) + f_+(k'-j,Q'-q)$, with $Q'=\min(k',Q)$, gives the optimal balance between positive and negative barycenters. A backtrack on $I_-$ and $I_+$ finally gives the best partition and the projection with the associated barycenters given in $\mu_-$ and $\mu_+$.

Each dynamic program needs only to build the best partitions for the $k$ smallest or largest partitions so their complexity is in $O(k^2Q)$. The complexity of the grid search is $O(k^2Q)$ and the complexity of the backtrack is $O(Q)$. The overall complexity of the projection is therefore $O(k^2Q)$. 

\subsection{Convergence}
Our theoretical convergence analysis can directly be applied to this setting for a problem with squared loss without regularization. The feasible set is again a union of subspaces
\[
w \in \bigcup_{\substack{S \in \{0,1\}^{d\times k},\, S^T\ones = \ones \\ Z \in \{0,1\}^{k \times Q},\, Z\ones = \ones}} \{w : w = SZv\}.
\]
However the number of largest subspaces in terms of inclusion order is smaller. They are defined by selecting $k$ features among $d$ and partitioning these $k$ features into $Q$ groups so that their number is $N = {d \choose k}\stirling{k}{Q}$. Using classical bounds on the binomial coefficient and \eqref{eq:stirling_approx}, we have for $k\geq3$, $Q\geq3$, 
\[
N\leq d^kk^QQ^{k-Q}.
\]
Our analysis thus predicts that only 
\[
n\geq 36\max\left\{QC_0^2,\; \frac{1}{c}(k\log d + Q \log(k) + (k-Q)\log(Q))\right\}
\]
isotropic independent sub-Gaussian samples are sufficient for the projected gradient algorithm to converge. It produces $Q+1$ cluster of features, one being a cluster of zero features, reducing dimensionality, while needing roughly as many samples as non-zero features.

\section{Numerical Experiments}\label{sec:experiments}
We now test our methods, first on artificial datasets to check their robustness to noisy data. We then test our algorithms for feature clustering on real data extracted from movie reviews. While our approach is general and applies to both features and samples, we observe that our algorithms compare favorably with specialized algorithms for these tasks.

\subsection{Synthetic dataset}
\subsubsection{Clustering constraint on sample points\label{sec:synth_samp}}
We test the robustness of our method when the information of the regression problem leading to the partition of the samples lies in a few features.
We generate $n$ data points $(x_i,y_i)$ for $i=1,\ldots,n$, with $x_i \in \mathbb{R}^d$, $d= 8$, and $y_i \in \mathbb{R}$, divided in $Q=3$ clusters corresponding to regression tasks with weight vectors $v_q$. 
Regression labels for points $x_i$ in cluster $\GG_q$ are given by $y_i = v_q^Tx_i + \eta_y$, where $\eta_y \sim \mathcal{N}(0,\sigma_y^2)$. We test the robustness of the algorithms to the addition of noisy dimensions by completing $x_i$ with $d_n$ dimensions of noise $\eta_d \sim \mathcal{N}(0,\sigma_d)$. For testing the models we take the difference between the true label and the best prediction such that the Mean Square Error (MSE) is given by 
\BEQ\label{eq:MSE_samples}
\Loss(y,X,W) = \frac{1}{2n}\sum_{i=1}^n\min_{q=1\ldots,Q} (y_i-v_q^Tx_i)^2.
\EEQ
The results are reported in Table~\ref{table:clustPoints} where the intrinsic dimension is 10 and the proportion of dimensions of noise $d_n/(d+d_n)$ increases. On the algorithmic side,  ``Oracle" refers to the least-squares fit given the true assignments, which can be seen as the best achievable error rate, AM refers to alternate minimization, PG refers to projected gradient with squared loss, CG refers to conditional gradient and RC to regression clustering as proposed by \citet{Zhang03}, implemented using the Harmonic K-means formulation. PG, CG and RC were followed by AM refinement. 1000 points were used for training, 100 for testing. The regularization parameters were 5-fold cross-validated using a logarithmic grid. Noise on labels is $\sigma_y = 10^{-1}$ and noise on added dimensions is $\sigma_d=1$. Results were averaged over 50 experiments with figures after the $\pm$ sign corresponding to one standard deviation.

\begin{table}[h]
\begin{center}
\begin{tabular}{|l|c|c|c|c|c|c|c|}
\hline
&p = 0&p = 0.25&p = 0.5&p = 0.75&p = 0.9&p = 0.95\\\hline
Oracle&0.52$\pm$0.08&0.55$\pm$0.07&0.55$\pm$0.10&0.58$\pm$0.09&0.71$\pm$0.11&1.17$\pm$0.18\\\hline
AM&\textbf{0.52}$\pm$0.08&\textbf{0.55}$\pm$0.07&5.57$\pm$4.11&6.93$\pm$14.39&101.08$\pm$55.49&133.48$\pm$52.20\\\hline
\textbf{PG}&1.53$\pm$7.13&3.98$\pm$17.65&\textbf{3.20}$\pm$13.23&5.64$\pm$20.50&91.33$\pm$39.32&\textbf{131.48}$\pm$50.90\\\hline
\textbf{CG}&0.87$\pm$2.45&1.16$\pm$4.29&3.64$\pm$11.02&\textbf{5.43}$\pm$14.33&91.19$\pm$53.00&136.57$\pm$58.60\\\hline
RC&\textbf{0.52}$\pm$0.08&\textbf{0.55}$\pm$0.07&5.59$\pm$20.27&13.45$\pm$28.76&\textbf{59.19}$\pm$37.97&135.77$\pm$66.96\\\hline
\end{tabular}
\end{center}
\caption{Test MSE given by \eqref{eq:MSE_samples} along proportion of added dimensions of noise $p = d_n/(d+d_n)$. 
\label{table:clustPoints}}
\end{table}

All algorithms perform similarly, RC and AM get better results without added noise. None of the present algorithms get a significantly better behavior with a majority of noisy dimensions.

\subsubsection{Clustering constraint on features}
We test the robustness of our method when with the number of training samples or the level of noise in the labels. We generate $n$ data points $(x_i,y_i)$ for $i=1,\ldots,n$ with $x_i \in \mathbb{R}^d$, $d= 100$, and $y_i \in \mathbb{R}$. Regression weights $w$ have only $5$ different values $v_q$ for $q=1,\ldots,5$, uniformly distributed around 0.
Regression labels are given by $y_i = w^T\x_i + \eta$, where $\eta \sim \mathcal{N}(0,\sigma^2)$. 
We vary the number of samples $n$ or the level of noise $\sigma$ and measure $\|w_*-\hat{w}\|_2$, the $l_2$ norm of the difference between the true vector of weights $w^*$ and the estimated ones $\hat{w}$.

In Table~\ref{table:clustFeatures} and ~\ref{table:clustFeaturesSig}, we compare the proposed algorithms to Least Squares (LS), Least Squares followed by K-means on the weights (using associated centroids as predictors) (LSK) and OSCAR \citep{bondell08}. For OSCAR we used a submodular approach \citep{Bach2012} to compute the corresponding proximal algorithm, which makes it scalable.  ``Oracle" refers to the Least Square solution given the true assignments of features and can be seen as the best achievable error rate. Here too, PG refers to projected gradient with squared loss (initialized with the solution of Least Square followed by k-means), CG refers to conditional gradient, CGPG refers to conditional gradient followed by PG. When varying the number of samples, noise on labels is set to $\sigma=0.5$ and when varying level of noise $\sigma$ number of samples is set to $n = 150$. Parameters of the algorithms were all cross-validated using a logarithmic grid. Results were averaged over 50 experiments and figures after the $\pm$ sign correspond to one standard deviation.

\begin{table}[h]
\begin{center}
\begin{tabular}{|l|c|c|c|c|c|}
\hline
&$n$ = 50&$n$ = 75&$n$ = 100&$n$ = 125&$n$ = 150\\\hline
Oracle&0.16$\pm$0.06&0.14$\pm$0.04&0.10$\pm$0.04&0.10$\pm$0.04&0.09$\pm$0.03\\\hline
LS&61.94$\pm$17.63&51.94$\pm$16.01&21.41$\pm$9.40&1.02$\pm$0.18&0.70$\pm$0.09\\\hline
LSK&62.93$\pm$18.05&57.78$\pm$17.03&10.18$\pm$14.96&0.31$\pm$0.19&0.19$\pm$0.12\\\hline
\textbf{PG}&63.31$\pm$18.24&52.72$\pm$16.51&5.52$\pm$14.33&\textbf{0.14}$\pm$0.09&\textbf{0.09}$\pm$0.04\\\hline
\textbf{CG}&61.81$\pm$17.78&52.59$\pm$16.58&17.24$\pm$13.87&1.20$\pm$1.38&1.05$\pm$1.37\\\hline
\textbf{CGPG}&62.29$\pm$18.15&\textbf{50.15}$\pm$17.43&\textbf{0.64}$\pm$2.03&0.15$\pm$0.19&0.17$\pm$0.53\\\hline
OS&\textbf{61.54}$\pm$17.59&52.87$\pm$15.90&11.32$\pm$7.03&1.25$\pm$0.28&0.71$\pm$0.10\\\hline
\end{tabular}
\end{center}
\caption{Measure of $\|w_*-\hat{w}\|_2$, the $l_2$ norm of the difference between the true vector of weights $w^*$ and the estimated ones $\hat{w}$ along number of samples $n$.
\label{table:clustFeatures}}
\end{table}

\begin{table}[h]
\begin{center}
\begin{tabular}{|l|c|c|c|c|}
\hline
&$\sigma$ = 0.05&$\sigma$ = 0.1&$\sigma$ = 0.5&$\sigma$ = 1\\\hline
Oracle&0.86$\pm$0.27&1.72$\pm$0.54&8.62$\pm$2.70&17.19$\pm$5.43\\\hline
LS&7.04$\pm$0.92&14.05$\pm$1.82&70.39$\pm$9.20&140.41$\pm$18.20\\\hline
LSK&1.44$\pm$0.46&2.88$\pm$0.91&19.10$\pm$12.13&48.09$\pm$27.46\\\hline
\textbf{PG}&\textbf{0.87}$\pm$0.27&\textbf{1.74}$\pm$0.52&\textbf{9.11}$\pm$4.00&26.23$\pm$18.00\\\hline
\textbf{CG}&23.91$\pm$36.51&122.31$\pm$145.77&105.45$\pm$136.79&155.98$\pm$177.69\\\hline
\textbf{CGPG}&1.52$\pm$3.13&140.83$\pm$710.32&17.34$\pm$53.31&\textbf{24.80}$\pm$16.32\\\hline
OS&14.43$\pm$2.45&18.89$\pm$3.46&71.00$\pm$10.12&140.33$\pm$18.83\\\hline
\end{tabular}
\end{center}
\caption{Measure of $\|w_*-\hat{w}\|_2$, the $l_2$ norm of the difference between the true vector of weights $w^*$ and the estimated ones $\hat{w}$ along level of noise $\sigma$.
\label{table:clustFeaturesSig}}
\end{table}

We observe that both PG and CGPG give significantly better results than other methods and even reach the performance of the Oracle for $n>d$ and for small $\sigma$, while for $n\leq d$ results are in the same range. 

\subsection{Real data}
\subsubsection{Predicting ratings from reviews using groups of words.}

We perform ``sentiment" analysis of newspaper movie reviews. We use the publicly available dataset introduced by~\citet{pang05} which contains movie reviews paired with star ratings.  We treat it as a regression problem, taking responses for $y$ in $(0,1)$ and word frequencies as covariates. The corpus contains $n = 5006$ documents and we reduced the initial vocabulary to $d = 5623$ words by eliminating stop words, rare words and words with small TF-IDF mean on whole corpus. We evaluate our algorithms for regression with clustered features against standard regression approaches: Least-Squares (LS),  and Least-Squares followed by k-means on predictors (LSK), Lasso and Iterative Hard Thresholding (IHT). We also tested our projected gradient with sparsity constraint, initialized by the solution of LSK (PGS) or by the solution of CG (CGPGS). Number of clusters, sparsity constraints and regularization parameters were 5-fold cross-validated using respectively grids going from 5 to 15, $d/2$ to $d/5$ and logarithmic grids. Cross validation and training were made on 80\% on the dataset and tested on the remaining 20\% it gave $Q=15$ number of clusters and $d/2$ sparsity constraint for our algorithms. Results are reported in Table~\ref{table:resultsReview}, figures after the $\pm$ sign correspond to one standard deviation when varying the training and test sets on 20 experiments.  

All methods perform similarly except IHT and Lasso whose hypotheses does not seem appropriate for the problem. Our approaches have the benefit to reduce dimensionality from 5623 to 15 and provide meaningful cluster of words.
The clusters with highest absolute weights are also the ones with smallest number of words, which confirms the intuition that only a few words are very discriminative. We illustrate this in Table~\ref{table:wordsReview}, picking randomly words of the four clusters within which associated predictor weights $v_q$ have largest magnitude.

\begin{table}[h]
\begin{center}
\begin{tabular}{|c|c|c|c|c|c|}
\hline
LS&LSK&PG&CG&CGPG&OS\\\hline
1.51$\pm$0.06&1.53$\pm$0.06&1.52$\pm$0.06&1.58$\pm$0.07&1.49$\pm$0.08&1.47$\pm$0.07\\\hline
\end{tabular}

\begin{tabular}{|c|c|c|c|}
\hline
PGS&CGPGS&IHT&Lasso \\\hline
1.53$\pm$0.06&1.49$\pm$0.07&2.19$\pm$0.12&3.77$\pm$0.17\\\hline
\end{tabular}
\caption{100 $\times$ mean square errors for predicting movie ratings associated with reviews. 
\label{table:resultsReview}}
\end{center}
\end{table}

\begin{table}[h]
\begin{center}
\scalebox{0.88}{
\begin{tabular}{| l | c |}
\hline \textbf{First and Second Cluster} & bad, awful, \\
  (negative) & worst, boring, ridiculous,\\
  sizes 1 and 7   & watchable, suppose, disgusting, \\
\hline \textbf{Last and Before Last Cluster  } & perfect,hilarious,fascinating,great \\
 (positive) & wonderfully,perfectly,goodspirited,\\
sizes 4 and 40 &     world, intelligent,wonderfully,unexpected,gem,recommendation, \\
& excellent,rare,unique,marvelous,good-spirited,\\
& mature,send,delightful,funniest\\ \hline
\end{tabular}
}
\end{center}
\caption{Clustering of words on movie reviews. 
We show clusters of words within which associated predictor weights $v_q$ have largest magnitude. First and second one are associated to a negative coefficient and therefore bad feelings about movies, last and before last ones to a positive coefficient and good feelings about movies.
\label{table:wordsReview}}
\end{table}

\subsection*{Acknowledgements}
AA is at CNRS, at the D\'epartement d'Informatique at \'Ecole Normale Sup\'erieure, 2 rue Simone Iff, 75012 Paris, France. FB is at the D\'epartement d'Informatique at \'Ecole Normale Sup\'erieure and INRIA, Sierra project-team, PSL Research University. The authors would like to acknowledge support from a starting grant from the European Research Council (ERC project SIPA), an AMX fellowship, the MSR-Inria Joint Centre, as well as support from the chaire {\em \'Economie des nouvelles donn\'ees}, the {\em data science} joint research initiative with the {\em fonds AXA pour la recherche} and a gift from Soci\'et\'e G\'en\'erale Cross Asset Quantitative Research.

{
\bibliographystyle{plainnat}
\bibliography{main}
}

\clearpage
\section{Appendix}
\subsection{Formulations for classification \label{sec:classif}}
We present here formulations of clustering  either features or samples when our task is to classify samples into $K$ classes. For both settings we assume that $n$ sample points are given, represented by the matrix $X = (x_1,...,x_n)^T \in \reals^{n \times d}$ and corresponding labels $Y = (y_1,\ldots,y_K)\in \{0,1\}^{n\times K}$.
\subsubsection{Clustering features for classification}
Here we search $K$ predictors $W = (w_1,\ldots,w_K)$, each of them having features clustered in $Q$ groups $\{\GG_1,\ldots,\GG_Q\}$ such that for any $k$, $w_k^{j} = v_k^{q}$ if feature $j$ is in group $q$. Partition of the features is shared by all predictors but each has different centroids represented in the vector $v_k$. Using an assignment matrix $Z$ and the matrix of centroids $V=(v_1,\ldots,v_k)$, our problem can therefore be written 

\BEQ\label{eq:feat_class}
\BA{ll}
\mbox{minimize} & \frac{1}{n}\sum_{i=1}^n \loss\left(y_i, x_i^TW \right) + \frac{\lambda}{2} \|W\|_F^2 \\
\mbox{subject to} & W = ZV, \, Z \in \{0,1\}^{d\times Q},\, Z\ones = \ones \nonumber
\EA
\EEQ
in variables $W\in \reals^{d\times K}$, $V\in \reals^{Q\times K}$ and $Z$. $\loss\left(y_i, x_i^TW \right)$ is a squared or logistic multiclass loss and regularization can either be seen as a standard $\ell_2$ regularization on the $w_k$ or a weighted regularization on the centroids $v_k$.

\subsubsection{Clustering samples for classification}
Here our objective is to form $Q$ groups $\{\GG_1,\ldots,\GG_Q\}$ of sample points to maximize the within-group prediction performance. For classification, within each group $\GG_q$, samples are predicted using a common matrix of predictors $V^{q} = (v_1^{q},\ldots,v_K^q)$. Our problem can be written

\BEQ\label{eq:cl-smp-class}
\mbox{minimize}~ \frac{1}{n}\sum_{i \in \mathcal{G}_q} \loss\left(y_i, x_i^T V^q \right) + \frac{\lambda}{2} \sum_{q=1}^Q s_q \|V^q\|^2_F 
\EEQ
in the variables $V = (V^1,\ldots,V^Q) \in \reals^{d \times K \times Q}$ and $\GG = (\GG_1,\ldots,\GG_Q)$ such that $\GG$ is a partition of the $n$ samples.  $\loss\left(y_i, x_i^TW \right)$ is a squared or logistic multiclass and $\frac{\lambda}{2} \sum_{q=1}^Q s_q \|V^q\|^2_F$ is a weighted regularization. Using an assignment matrix $Z \in \{0,1\}^{n\times Q}$ and auxiliary variables $(W^1,\ldots,W^n) \in \reals^{d \times K\times n}$ such that $W^i=V^q$ if $i \in \mathcal{G}_q$, problem~\eqref{eq:cl-smp-class} can be rewritten
\BEQ
\BA{ll}
\mbox{minimize} & \frac{1}{n}\sum_{i = 1}^n \loss\left(y_i,{W^i}^T x_i \right) + \frac{\lambda}{2} \sum_{i=1}^n  \|W^i\|^2_F \\
\mbox{subject to} & \tilde{W}^T = Z\tilde{V}^T, \, Z \in \{0,1\}^{n\times Q}, \, Z\ones = \ones, \nonumber \\
\EA
\EEQ
in the variables $W \in \reals^{d\times K\times n}$, $V \in \reals^{d\times K\times Q}$ and $Z$, where $\tilde{W} = (\Vect(W^1),\ldots,\Vect(W^n)), \, \tilde{V} = (\Vect(V^1),\ldots,\Vect(V^Q))$ and for a matrix $A$, $\Vect(A)$ concatenates its columns into one vector.

Remark that in that case we must have $K>Q$ otherwise we output more possible answers than classes (in that case the problem is ill-posed).
\subsection{Clustered multitask \label{sec:multitask}}  
Our framework applies also to transfer learning by clustering similar tasks. Given a set of $K$ supervised tasks like regression or binary classification, transfer learning aims at jointly solving these tasks, hoping that each task can benefit from the information given by other tasks.
For simplicity, we illustrate the case of multi-category classification, which can be extended to the general multitask setting.
When performing classification with one-versus-all majority vote, we train one binary classifier for each class vs. all others. Using a regularizing penalty such as the squared  $\ell_2$ norm, the problem of multitask learning can be cast as
\BEQ
\BA{ll}
\mbox{minimize}~ & \frac{1}{n} \sum_{k=1}^K \sum_{i = 1}^n \loss(y_i^k,w_k^T\x_i) +\lambda \sum_{i=1}^K \|w_k\|_2^2. 
\EA
\EEQ
in the matrix variable $W = (w_1,\ldots,w_k)\in\reals^{d \times K}$ of classifier vectors (one column per task). We write $\Loss(y,X,W)$ and $R(W)$ the first and second term of this problem. Various strategies are used to leverage the information coming from related tasks, such as low rank \cite{argyriou08} or structured norm penalties \cite{ciliberto2015convex} on the matrix of classifiers $W$. Here we follow the clustered multitask setting introduced in \cite{Jacob09}. Namely we add a penalty $\Omega$ on the classifiers $(w_1,\ldots,w_K)$ which enforce them to be clustered in $Q$ groups $\GG_1,\ldots,\GG_Q$ around centroids $V = (v_1,\ldots,v_Q) \in \reals^{d\times Q}$. This penalty can be decomposed in

\begin{itemize}[noitemsep,leftmargin =*]
\item A measure of the {\bf norm of the barycenter of centers} $\bar{v} = \frac{1}{K} \sum_{q=1}^Q s_qv_q$
\[
\Omega_{mean}(V)  =  \frac{\lambda_m }{2}K||\bar{v}||_2^2  
\]
\item A measure of the {\bf variance between clusters}
\[ 
\Omega_{between}(V) = \frac{\lambda_b}{2}\sum_{q=1}^Q s_q||v_q - \bar{v}||_2^2  
\]
\item A measure of the {\bf variance within clusters}
\[
\Omega_{within}(W,V) =  \frac{\lambda_w}{2}\sum_{q=1}^Q\sum_{i \in  \GG_q }||w_i - v_q||^2_2 
\]
\end{itemize}
The total penalty $\Omega(W,V) = \Omega_{mean}(V) + \Omega_{between}(V) +\Omega_{within}(W,V)$ is illustrated in Figure~\ref{fig:Clustered_Penalty}. 

\begin{figure}[H]
\begin{center}
\includegraphics[scale=0.5]{./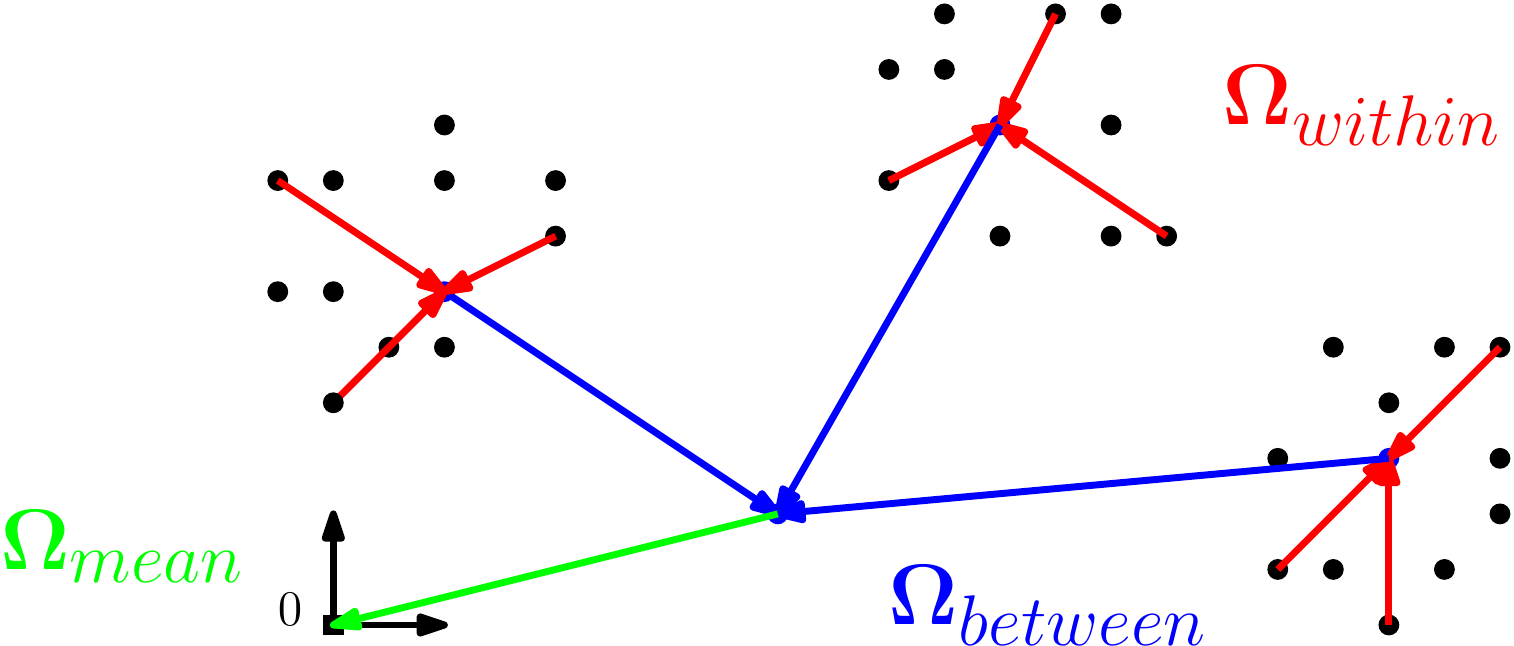}
\end{center}
\caption{Decomposed clustering penalty on $K$ classes in the space of classifier vectors.
\label{fig:Clustered_Penalty}} 
\end{figure}

The clustered multitask learning problem can then be written using an assignment matrix $Z$ and an auxiliary variable $W$ Denoting $\Pi = \idm - \frac{\ones\ones^T}{K}$ the centering matrix of the $K$ classes, we develop each term of the penalty,

\BEAS
\Omega_{mean}(V,Z) & = & \frac{\lambda_M}{2} \Tr(VZ^T(\idm-\Pi)ZV^T), \\ 
\Omega_{between}(V,Z) & = &  \frac{\lambda_B}{2} \Tr(VZ^T\Pi ZV^T), \\
\Omega_{within}(W,V,Z) & = & \frac{\lambda_W}{2}||W-VZ^T||^2_F. 
\EEAS
Using $\tilde{W} = VZ^T$ the total penalty can then be written 
\[ \Omega(W,\tilde{W}) =  \frac{\lambda_M}{2} \Tr(\tilde{W}(\idm-\Pi)\tilde{W}^T) + \frac{\lambda_B}{2} \Tr(\tilde{W}\Pi \tilde{W}^T) + \frac{\lambda_W}{2}||W-\tilde{W}||^2_F, \]
and the problem is 
\begin{align*}
\mbox{minimize}~ & \Loss(y,X,W) + R(W) + \Omega(W,\tilde{W}) \\
\mbox{s.t.}~ & \tilde{W}^T = ZV^T, \quad Z \in \{0,1\}^{K\times Q},\quad Z\ones = \ones,
\end{align*}
in variables $W \in \reals^{d \times K}$, $\tilde{W} \in \reals^{d\times K}$, $V\in \reals^{d\times Q}$ and $Z$.

\subsection{Convex relaxations formulations}\label{sec:cvx_relax_other}

\subsubsection{Clustering samples for regression task}
We use a squared loss $l(\hat{y},y) = \frac{1}{2}(y-\hat{y})^2$ in \eqref{eq:cl-smp} and minimize in $V$ to get a clustering problem that we can tackle using Frank-Wolfe method. We fix a partition $\GG$ and define for each group $\GG_q = \{k_1,\ldots,k_{s_q}\}\subset\{1,\ldots,d\}$, the matrix $E\in\{0,1\}^{s_q \times n}$ that picks the $s_q$ points of $\GG_q$, \ie~$(E_q)_{ij} = 1$ if $j = k_i$ and $0$ otherwise.
Therefore $y_q = E_qy\in\reals^{s_q}$ and $X_q=E_qX\in\reals^{s_q \times d}$ are respectively the vector of labels and the matrix of sample vectors of the group $\GG_q$. We naturally have $E_qE_q^T = \idm$ as rows of $E_q$ are orthonormal and $E_q^TE_q$ is a diagonal matrix where $Z_q = \diag(E_q^TE_q)\in \{0,1\}^n$ is the assignment vector in group $\GG_q$, \ie~$(Z_q)_j=1$ if $j\in\GG_q$ and $0$ otherwise. $Z = (Z_1,\ldots,Z_Q)$ is therefore an assignment matrix for the partition $\GG$.
 
Minimizing in $v$ and using the Sherman-Woodbury-Morrison formula, we obtain a function of the partition
\BEAS
\tilde{\psi}(\GG) & = & \min_{v_1,...,v_Q}\frac{1}{2n}\sum_{q=1}^Q\|y_q-X_qv_q\|_2^2 + \frac{\lambda}{2}\sum_{q=1}^Qs_q\|v_q\|_2^2 \\
& = & \frac{1}{2n}\sum_{q=1}^Q\|y_q\|_2^2 - y_q^TX_q(s_q\lambda n  \idm+X_q^TX_q)^{-1}X_q^Ty_q \\
& = & \frac{1}{2n} \sum_{q=1}^Qy_q^T(\idm+\frac{1}{s_q \lambda n}X_qX_q^T)^{-1}y_q. \\
\EEAS
Formulating terms of the sum as solutions of an optimization problem, we get 
\BEAS
\tilde{\psi}(\GG) & = & \frac{1}{2n} \sum_{q=1}^Q\max_{\alpha_q\in \reals^{s_q}} -\alpha_q^T(\idm + \frac{1}{s_q \lambda n}X_qX_q^T)\alpha_q + 2y_q^T\alpha_q \\
& = & \frac{1}{2n} \max_{\substack{\alpha = (\alpha_1;\ldots;\alpha_Q) \\ \alpha_q \in \reals^{s_q}}} \sum_{q=1}^Q-\alpha_q^T(\idm + \frac{1}{s_q \lambda n}X_qX_q^T)\alpha_q + 2y_q^T\alpha_q,
\EEAS
where $(\alpha_1;\ldots;\alpha_Q) = (\alpha_1^T,\ldots,\alpha_Q^T)^T$ stacks vectors $\alpha_q$ in one vector of size $\sum_{q=1}^Qs_q=n$.
Using that $E=(E_1;\ldots;E_Q) = (E_1^T,\ldots,E_Q^T)^T \in \{0,1\}^{n\times n}$ is an orthonormal matrix, we make the change of variable $\beta = E^T\alpha$ (and so $\alpha = E\beta$) such that for  $\alpha = (\alpha_1;\ldots;\alpha_Q)$, $\alpha_q \in \reals^{s_q}$, $\alpha_q = E_q\beta$. Decomposing $X_q$ and $y_q$ and using $E_q^TE_q = \diag(Z_q)$, we get 
\BEAS
\tilde{\psi}(\GG)& = & \frac{1}{2n} \max_{\beta \in \reals^n} \sum_{q=1}^Q-\beta^TE_q^T(\idm + \frac{1}{s_q \lambda n}X_qX_q^T)E_q\beta + 2y_q^TE_q\beta \\
& = & \frac{1}{2n} \max_{\beta \in \reals^n} \sum_{q=1}^Q-\beta^TE_q^T(\idm + \frac{1}{s_q \lambda n}E_qXX^TE_q^T)E_q\beta + 2y^TE_q^TE_q\beta \\
& = & \frac{1}{2n} \max_{\beta \in \reals^n} \sum_{q=1}^Q-\beta^T\diag(Z_q)\beta - \frac{1}{s_q \lambda n}\beta^T\diag(Z_q)XX^T\diag(Z_q)\beta + 2y^T\diag(Z_q)\beta.
\EEAS
For $q$ fixed, $\left(\frac{1}{s_q}\diag(Z_q)XX^T\diag(Z_q)\right)_{ij} = \frac{1}{s_q}x_i^Tx_j$ if $(i,j)\in \GG_q$ and $0$ otherwise. So 
\[
\sum_{q=1}^Q\frac{1}{s_q}\diag(Z_q)XX^T\diag(Z_q) = XX^T\circ M,
\]
where $M = Z(Z^TZ)^{-1}Z^T$ is the normalized equivalence matrix of the partition $\GG$ and $\circ$ denotes the Hadamard product. Using $\sum_{q=1}^Q \diag(Z_q) = \idm$, we finally get a function of the equivalence matrix
\BEAS
\psi(M) & = & \frac{1}{2n}\max_{\beta\in \reals^n} -\beta^T(\idm+\frac{1}{\lambda n}XX^T\circ M)\beta + 2y^T\beta \\
& = & \frac{1}{2n} y^T(\idm +\frac{1}{\lambda n}XX^T\circ M)^{-1}y.
\EEAS
Its gradient is given by
\[
 \nabla \psi(M) =  - \frac{1}{2\lambda n^2 } XX^T \circ \left( (\idm + \frac{1}{\lambda n } XX^T \circ M)^{-1}yy^T (\idm + \frac{1}{\lambda n } XX^T \circ M)^{-1} \right).
\]
Algorithm~\ref{algo:condGrad} can be applied to minimize $\psi$. The linear oracle can indeed be computed with k-means using that the gradient is negative semi-definite.  
For a fixed $Z$, the linear predictors $v_q$ for each cluster of points are given by
\BEAS
v_q & = & (n \lambda s_q \idm+ X_q^TX_q)^{-1} X_q^T y_q \\
& = & (n \lambda s_q \idm+ X^TE_q^TE_qX)^{-1} X^TE_q^TE_q y \\
& = & (n \lambda s_q \idm+ X^T\diag(Z_q)X)^{-1} X^T\diag(Z_q) y.
\EEAS

\subsubsection{Convex relaxations for classification}
We observe that convex relaxations for classification derive from computations of the convex relaxations for regression by replacing vector of labels $y$ by the corresponding matrix of labels $Y$.

\end{document}